\newtheorem{remark}{Remark}
\newtheorem{theorem}{Theorem}
\newtheorem{lemma}{Lemma}
\newtheorem{definition}{Definition}
\newtheorem{proposition}{Proposition}
\newtheorem{problem}{Problem}
\DeclareMathOperator*{\argmax}{argmax}
\DeclareMathOperator*{\argsup}{argsup}
\title{Sound Heuristic Search Value Iteration for Undiscounted POMDPs with Reachability Objectives}
\author[1]{\href{mailto:<qi.ho@colorado.edu>?Subject=Your UAI 2024 paper}{Qi Heng Ho}{}}
\author[2]{Martin S. Feather}
\author[2]{Federico Rossi}
\author[1]{Zachary Sunberg}
\author[1]{Morteza Lahijanian}
\affil[1]{%
    Department of Aerospace Engineering Sciences\\
    University of Colorado Boulder\\
    Boulder, Colorado, USA
}
\affil[2]{%
    Jet Propulsion Laboratory\\
    California Institute of Technology\\
    Pasadena, California, USA
}
\begin{document}
\maketitle

\begin{abstract}
  Partially Observable Markov Decision Processes (POMDPs) are powerful models for sequential decision making under transition and observation uncertainties. This paper studies the challenging yet important problem in POMDPs known as the (indefinite-horizon) Maximal Reachability Probability Problem (MRPP), where the goal is to maximize the probability of reaching some target states.  This is also a core problem in model checking with logical specifications and is naturally undiscounted (discount factor is one). Inspired by the success of point-based methods developed for discounted problems, we study their extensions to MRPP. Specifically, we focus on trial-based heuristic search value iteration techniques and present a novel algorithm that leverages the strengths of these techniques for efficient exploration of the belief space (informed search via value bounds)
  while addressing their drawbacks in handling loops for indefinite-horizon problems. The algorithm produces policies with two-sided bounds on optimal reachability probabilities. We prove convergence to an optimal policy from below under certain conditions. Experimental evaluations on a suite of benchmarks show that our algorithm outperforms existing methods in almost all cases in both probability guarantees and computation time.
\end{abstract}

\section{Introduction}

Partially Observable Markov Decision Processes (POMDPs) are powerful models for sequential decision making in which the agent has both transition uncertainty and partial observability \citet{Sondik1978pomdp, Kaelbling1998pomdp}. The goal in a POMDP planning problem is to compute a policy that optimizes for some objective, typically expressed as a reward function or logical specification, e.g., linear temporal logic (LTL) or probabilistic computation tree logic \cite{baier2008principles}. POMDP problems are notoriously hard (due to the curse of dimensionality and history), and finding an optimal policy for them is undecidable \citep{MADANI2003Undecidability}. Hence, to enable tractability, simple objectives are usually considered by either fixing a finite-time horizon or posing discounting.  For such objectives, effective techniques have been developed, which can provide approximate solutions fast~\citep{shani2013survey}. However, a crucial problem in planning under uncertainty and probabilistic model checking is to find a policy that maximizes the probability of reaching a set of target states without knowing \emph{a priori} how many steps it may take. This is known as the (indefinite-horizon) Maximal Reachability Probability Problem (MRPP)~\citep{de1998formal}. This paper focuses on MRPP for POMDPs and aims to develop an efficient algorithm with optimality guarantees for MRPP.

For infinite horizon discounted problems, point based methods \citep{Pineau2003pbvi, Smith2005HSVI2, Kurniawati-RSS08-SARSOP} approximate the value function by incrementally exploring the space of reachable beliefs. Trial-based belief exploration algorithms such as Heuristic Search Value Iteration (HSVI2) \citep{Smith2005HSVI2} and its extensions have been shown to be the most effective. These methods utilize two-sided bounds to heuristically search for a near-optimal policy via tree search, and can efficiently solve moderately large POMDPs in both finite and discounted infinite-horizon settings to arbitrary precision. However, these techniques have not been studied to address (undiscounted indefinite-horizon) MRPP. Recently, approaches have been proposed to provide under-approximations on reachability probabilities for MRPP \citep{Bork2022underapproximating, Andriushchenko2022InductivePOMDP, andriushchenko2023symbiotic}. Although the under-approximations are shown to be tight empirically, there is no way to ascertain how close the approximations are or whether the computed policy has converged to optimality.

Taking inspiration from the success of trial-based belief exploration for discounted POMDPs and the goal of designing an algorithm that can efficiently attain tight two-sided bound approximations, this paper studies the effectiveness of trial-based belief exploration for POMDPs with MRPP objective. To this end, we analyze the drawbacks of discounted POMDP trial-based search when applied to MRPP, and propose an algorithm that leverages the strength of trial-based belief exploration while addressing these drawbacks. 

Our proposed algorithm is a trial-based belief exploration approach that maintains and improves two-sided bounds on the maximal probability of satisfying a reachability objective. Its use of forward exploration using these bounds allows informed exploration of the relevant areas of the belief space to improve search efficiency. Improving bounds at one belief can also improve bounds at other beliefs. We propose new heuristics for trial-based search tailored to MRPP, and discuss techniques to ensure improvability of both bounds during search. We prove the asymptotic convergence of the policy from below under some conditions. Experimental evaluations show the applicability of our approach to compute tight lower and upper bounds simultaneously that improve over time, converging to the optimal solution for several moderately sized POMDPs. Results show that trial-based exploration allows for efficient search, outperforming state-of-the-art belief-based approaches. Further, our approach is highly competitive, obtaining two-sided bounds that can be tighter than that of existing solutions which generally compute either a lower \emph{or} upper bound, not both. 

In short, the contributions of the paper are: (i) an analysis of theoretical and practical issues when applying discounted-sum algorithms to MRPP, (ii) an efficient algorithm that simultaneously computes sound upper and lower bounds for maximal reachability probabilities of POMDPs in an anytime manner, (iii) proof of asymptotic convergence of the lower bound to the optimal reachability probability value, and (iv) a suite of benchmark comparisons that show our algorithm outperforms existing methods in almost every case both in tightness of the bound and computation time.

\textbf{Related Work \quad}
Algorithms to solve POMDPs with infinite horizon discounted properties have been extensively studied in the literature \citep{Lauri2023POMDPRobotics, shani2013survey}. 
A major bottleneck in those methods is the curse of dimensionality and history. 
To alleviate it, point-based methods such as Perseus \citep{spaan2005perseus}, HSVI2 \citep{Smith2005HSVI2}, SARSOP\citep{Kurniawati-RSS08-SARSOP}, and PLEASE \citep{zhang2015please} approximate the value function by incrementally exploring the space of reachable beliefs. These algorithms have been shown to be effective for moderately large discounted-sum POMDPs. They have been applied to the MRPP and POMDPs with LTL specifications~\citep{Bouton2020PointBasedModelChecking, kalagarla22a, yu2023trust}, but their theoretical properties only hold for discounted versions of such problems. In this work, we study the drawbacks of point based methods for MRPP, and propose an algorithm based on them to overcome these drawbacks and provide theoretical soundness.

It has been shown that MRPP is a special type of Stochastic Shortest Path Problem (SSP) with a specific non-negative reward structure~\citep{de1998formal}. 
\citet{Horak2018GoalHSVI} introduces a similar problem called Goal-POMDP, which is is an SSP with only positive costs and a set of goal states.  The objective of Goal-POMDP is to minimize the expected total cost until the goal set is reached. Similar to this work, \citet{Horak2018GoalHSVI} proposes extensions of HSVI2 to solve Goal-POMDPs. However, Goal-POMDP is different from MRPP because the assumption of positive cost and that the goal state is reachable from every state cannot be applied to MRPP.  Hence, algorithms for Goal-POMDPs cannot directly be used to solve MRPP.

The works closest to ours are belief-based approaches \citep{norman2017verification, Bork2022underapproximating, Bork2020overapp}. To the best of our knowledge, the only method which computes two-sided bounds with convergence guarantees is PRISM \citet{norman2017verification}. However, the approach is not scalable for larger POMDP problems, computing loose bounds in practice. \citet{Bork2022underapproximating} computes under-approximations by expanding beliefs in a breadth-first search manner, and adding them to a constructed belief MDP according to some heuristic. Beliefs not added are \emph{cut-off}, and values from a pre-computed policy are used from cut-off beliefs. In a similar manner, \citet{Bork2020overapp} compute over-approximations using breadth-first belief exploration and cut-offs. However, their technique relies heavily on good pre-computed policies, and requires searching a large part of the belief space to obtain good policies. On the contrary, our algorithm performs heuristic  trials in a depth-first manner using two-sided bounds, directing the search more efficiently.

An orthogonal approach to MRPP on POMDPs is to directly compute policies as Finite State Controllers (FSCs) \citep{Andriushchenko2022InductivePOMDP}. \citet{Andriushchenko2022InductivePOMDP} uses inductive synthesis to search for FSCs. They can find good small-memory policies relatively quickly, but they suffer when they require memory. \citet{andriushchenko2023symbiotic} proposes an approach which integrates inductive synthesis with belief-based approaches to extract the strengths of both techniques. Generally, these methods compute under-approximations in an anytime manner, and cannot detect when or if a near-optimal policy has been found. Our method, on the other hand, computes both lower and upper bounds, providing sub-optimality bounds and means for near-optimal policy guarantees.
\section{Preliminaries and Problem Formulation}
    
\textbf{Partially Observable Markov Decision Processes \quad} We focus on POMDPs with the following definition.

\begin{definition}[POMDP]
    \label{def:pomdp}
    A \emph{Partially Observable Markov Decision Process} (POMDP) is a tuple  $\mathcal{M} = (S, A, O, T, Z, b_0)$, where:
    $S, A,$ and $O$ are finite sets of states, actions, and observations, respectively, 
    $T : S \times A \times S \rightarrow [0,1]$ is the transition probability function,
    $Z : S \times A \times O \rightarrow [0,1]$ is the probabilistic observation function,
    and $b_0 \in \Delta(S)$ is an initial belief, where $\Delta(S)$ is the probability simplex (the set of all probability distributions) over $S$.
\end{definition}
\noindent
We denote the probability distribution over states in $S$ at time $t$ by $b_t$ and the probability of being in state $s \in S$ at time $t$ by $b_t(s)$. 

The evolution of an agent according to a POMDP model is as follows. At each time step $t \in \mathbb{N}_0$, the agent has a belief $b_t$ of its state $s_t$ as a probability distribution over $S$ and takes action $a_t \in A$. Its state evolves from $s_t \in S$ to $s_{t+1} \in S$ according to probability $T(s_t,a_t,s_{t+1})$, and it receives an observation $o_{t} \in O$ according to observation probability
$Z(s_{t+1}, a_t, o_{t})$. The agent then updates its belief recursively. That is for $s_{t+1} = s'$,
\begin{align}
    \label{eq:beliefupdate}
    b_{t+1}(s') \propto Z(s', a_t, o_{t}) \sum_{s \in S} T(s,a_t,s') b_{t}(s).
\end{align}
Then, the process repeats. 

The agent chooses actions according to a \textit{policy} $\pi: \Delta(S) \to A$, which maps a belief $b$ to an action. Typically, the agent is given a reward function $R : S \times A \rightarrow \mathbb{R}$, which is the immediate reward of taking action $a_t$ at state $s_t$ and transitioning to $s_{t+1}$. A POMDP can be reduced to an MDP with an infinite number of states, whose states are the beliefs $B = \{b \in \Delta(S)\}$. This MDP is called a \emph{belief MDP} \citep{Astrom1965beliefmdp}. Let $R(b,a) = \mathbb{E}\left[R(s,a)\right]$. When given a discount factor $\gamma \in [0, 1]$, the \textit{expected discounted-sum of rewards} that the agent receives under policy $\pi$ starting from belief $b_t$ is
\begin{align}
    \label{eq: total reward}
    V^{\pi}(b_t) = \mathbb{E} \Big[ \sum_{j=t}^{\infty} \gamma^{j - t} R\left(b_{j}, \pi(b_{j})\right) \mid b_t, \pi \Big].
\end{align}

The objective of discounted POMDP planning is typically to find a policy that maximizes $V^{\pi}(b_0)$ to some threshold $\epsilon$.

The optimal value function $V^*$ for a POMDP can be under-approximated arbitrarily well by a piecewise linear and convex function \citep{Sondik1978pomdp}, $V^*(b) \geq \max_{\alpha \in \Gamma^*}(\alpha^T b), $ where $\Gamma^*$ is a finite set of $|S|$-dimensional hyperplanes, called $\alpha$-vectors, representing the optimal value function.

\textbf{Trial-based Value Iteration for Discounted POMDPs \quad}

There exists mature literature on algorithms for discounted-sum POMDP problems for discount factor $\gamma < 1$. Among discounted-sum POMDP algorithms that provide finite time convergence guarantees, trial-based heuristic tree search \citep{Smith2005HSVI2, Kurniawati-RSS08-SARSOP, zhang2015please} generally exhibit the best performance. These algorithms typically maintain and refine upper and lower bounds on the value functions, and they explore the reachable belief space through repeated trials over a constructed belief tree.

The basic ingredients of these algorithms are policy representation, action selection, observation/belief selection, backup function, and trial termination criteria. We briefly describe HSVI2~\citep{Smith2005HSVI2}, a trial-based heuristic search algorithm with these basic ingredients.

HSVI2 maintains a set of upper $V^U$ and lower $V^L$ bounds on the optimal value function. Lower bounds are represented as $\alpha$-vectors and upper bounds are computed using an upper bound point set. Trials are conducted in a depth-first manner. At each belief $b_t$, the action with the highest $Q$ upper bound is chosen for expansion using the IE-MAX heuristic:
\begin{align}
    \label{eq:iemax}
    a^* = \arg\max_{a}\{R(b,a) + \mathbb{E}[V^U(b_{t+1})]\}.
\end{align}
Then, an observation is selected by computing the successor belief $b_{t+1}$ with the highest weighted excess uncertainty (WEU): 
\begin{align}
    & \textsc{WEU}(b_t, t, \epsilon) = \big[V^U(b_t) - V^L(b_t) - \epsilon \gamma^{-t}\big],\label{eq: weu} \\
    &o^* = \arg\max_{o}[P(o|b, a) \cdot \textsc{WEU}(b_{t+1}, t+1, \epsilon)]\label{eq:HSVI2heuristic}.
\end{align}
HSVI2 terminates a trial when $V^U(b_t) - V^L(b_t) \leq \epsilon \gamma^{-t}$. After each trial, a Bellman backup is performed over the belief states sampled, improving the lower and upper bound sets. When a discounted ($0 \leq \gamma < 1$) POMDP is given, HSVI2 provably converges to an $\epsilon$-optimal approximation of $V^*$($b_0$) with sound two-sided bounds.

\textbf{Problem Definition \quad}
The problem we are interested in is the undiscounted indefinite-horizon Maximal Reachability Probability Problem (MRPP). The agent is given a POMDP $\mathcal{M}$ with a set of target states $\mathrm{T} \subseteq S$. 
We define the \textit{reachability probability} $P^{\pi}_{\mathcal{M}}(\lozenge \mathrm{T})$ 

to be the probability of reaching $\mathrm{T}$ under policy $\pi$ from an initial belief $b_0$ for the POMDP $\mathcal{M}$. An \textit{optimal policy} that maximizes the reachability probability is: 
$\pi^* = \underset{\pi}{\argsup} P^\pi_{\mathcal{M}}(\lozenge \mathrm{T}).$
\begin{problem}[$\epsilon$-MRPP]
    \label{problem}
    Given a POMDP $\mathcal{M}$, a set of target states $\mathrm{T} \subseteq S$, and a regret bound $\epsilon \in (0, 1]$, find policy $\hat{\pi}$ that is $\epsilon$-optimal, i.e.,
    \begin{align}
        \label{eq: problem}
         P^{\pi^*}_{\mathcal{M}}(\lozenge \mathrm{T}) - P^{\hat{\pi}}_{\mathcal{M}}(\lozenge \mathrm{T}) \leq \epsilon.
    \end{align}
\end{problem}

Probabilistic reachability values can be computed by augmenting the POMDP with an absorbing state $S \cup \{s_\mathrm{T}\}$ and action $A \cup \{a_\mathrm{T}\}$ such that transition probabilities $T(s,a, s') = 1$ if $s \in T\cup \{s_\mathrm{T}\}$, $a = a_\mathrm{T}$, and $s' = s_\mathrm{T}$; otherwise $T(s,a,s') = 0$.  Then, by defining a reward function that assigns a reward of $1$ to the augmented transitions to $s_\mathrm{T}$ and otherwise $0$ (i.e., $R_\mathrm{rp}(s,a) = 1$ if $s\in T$ and $a = a_\mathrm{T}$, otherwise $R_\mathrm{rp}(s,a)=0$) the undiscounted ($\gamma = 1$) expected cumulative reward of a policy is equivalent to its reachability probability \citep{de1998formal}, i.e., for $\gamma = 1$,
\begin{equation}
    \label{eq:probability expected total reward}
    P^{\pi}_{\mathcal{M}}(\lozenge \mathrm{T}) = V^\pi(b_0) = \mathbb{E}\Big[ \sum_{t=0}^\infty \gamma^t R_\mathrm{rp}(b_t,\pi(b_t)) \mid b_0, \pi \Big].
\end{equation}

\begin{remark}
    In many cases, one may want to answer the question of whether there exists a policy that has a reachability probability that exceeds a given threshold. Solutions to Problem~\ref{problem} also allows one to answer such questions.
\end{remark}

\begin{remark}
    An algorithm for MRPP can also be used for problems in which the agent is tasked with maximizing the probability of satisfying temporal logic specifications, such as syntactically co-safe Linear Temporal Logic (cs-LTL)~\citep{kupferman2001modelchecking} or LTL over finite traces (LTLf)~\citep{ltlf}. These objectives can be converted into MRPP by planning in the product space of the POMDP and a Deterministic Finite Automaton (DFA) representation of the temporal logic formula.
\end{remark}
\section{From Discounted-Sum to Reachability Probabilities}
\label{sec: HSVI2problems}

Given the success of trial based tree search algorithms in discounted POMDPs and their application to probabilistic reachability problems \citep{Bouton2020PointBasedModelChecking}, a natural question that arises is whether these algorithms can directly approximate MRPP well. As it turns out, they unfortunately lose their desired theoretical properties, and there are problems in which these algorithms perform poorly. We discuss the issues that arise when applying these algorithms to MRPP. We focus on HSVI2 \citep{Smith2005HSVI2}, but the arguments presented also hold for other trial-based discounted-sum POMDP algorithms, e.g., \citep{Kurniawati-RSS08-SARSOP, zhang2015please}.

\paragraph{Incorrect converged solution if $\gamma < 1$ is used.}

Discounting causes trial-based search to converge to an under-approximation of the optimal probability. This is a strict under-approximation for all but trivial problems. More importantly, the admitted upper bound is incorrect when $\gamma < 1$.

\begin{proposition}
    The optimal value $V^{\pi^\gamma}(b_0)$ computed via Eq.~\eqref{eq:probability expected total reward} with discount $\gamma < 1$ strictly under-approximates the optimal probability value $P^{\pi^*}_{\mathcal{M}}(\lozenge \mathrm{T})$ for Problem~\ref{problem} if it takes more than one time step to reach $\mathrm{T}$ in the POMDP.
\end{proposition}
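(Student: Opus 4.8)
The plan is to express both objectives, for an arbitrary fixed policy, as functionals of a single stopping time — the step at which the unit reward is collected — and then argue that the discount factor strictly shrinks that reward whenever it is earned after the first step. First I would introduce, for a policy $\pi$, the stopping time $\tau$ equal to the first time step at which the agent occupies a state of $\mathrm{T}$ and plays $a_\mathrm{T}$, so that $R_\mathrm{rp} = 1$ is emitted, with $\tau = \infty$ if this never happens. Because the augmented dynamics send every such transition into the absorbing state $s_\mathrm{T}$, the reward is collected at most once along any trajectory, and the infinite sum in Eq.~\eqref{eq:probability expected total reward} collapses to a single discounted term:
\begin{align}
    V^{\pi}(b_0) = \mathbb{E}_{\pi}\big[\gamma^{\tau}\,\mathbf{1}\{\tau < \infty\}\big].\notag
\end{align}
Evaluating this at $\gamma = 1$ gives $\mathbb{P}_{\pi}(\tau < \infty)$, which the cited reduction underlying Eq.~\eqref{eq:probability expected total reward} identifies with $P^{\pi}_{\mathcal{M}}(\lozenge \mathrm{T})$; thus for $\gamma < 1$ the discounted value differs from the reachability value only through the weights $\gamma^{\tau} \le 1$.

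Next I would record the non-strict comparison. Pointwise, $\gamma^{\tau} \le 1$ on $\{\tau<\infty\}$ yields $V^{\pi}(b_0) \le P^{\pi}_{\mathcal{M}}(\lozenge \mathrm{T})$ for every $\pi$, so taking suprema and using that $\pi^\gamma$ maximizes the discounted objective while $\pi^*$ maximizes reachability gives $V^{\pi^\gamma}(b_0) \le P^{\pi^*}_{\mathcal{M}}(\lozenge \mathrm{T})$. To upgrade this to a strict inequality I would invoke the structural hypothesis that $\mathrm{T}$ cannot be reached within one step from $b_0$, so that $\tau \ge 2$ on $\{\tau<\infty\}$ under every policy and hence $\gamma^{\tau} \le \gamma^{2} < 1$ there. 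Inserting this into the representation of $V^{\pi^\gamma}$ and then using the reachability-optimality of $\pi^*$ produces the chain
\begin{align}
    V^{\pi^\gamma}(b_0) \le \gamma^{2}\,P^{\pi^\gamma}_{\mathcal{M}}(\lozenge \mathrm{T}) \le \gamma^{2}\,P^{\pi^*}_{\mathcal{M}}(\lozenge \mathrm{T}) < P^{\pi^*}_{\mathcal{M}}(\lozenge \mathrm{T}),\notag
\end{align}
where the middle inequality holds because $\pi^*$ is reachability-optimal and the final strict step uses $\gamma^{2} < 1$ together with $P^{\pi^*}_{\mathcal{M}}(\lozenge \mathrm{T}) > 0$. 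The only positivity required is this last one, and it is guaranteed because the hypothesis "it takes more than one step to reach $\mathrm{T}$" presupposes that $\mathrm{T}$ is reachable at all. Note that this routing through $P^{\pi^\gamma}_{\mathcal{M}}(\lozenge\mathrm{T}) \le P^{\pi^*}_{\mathcal{M}}(\lozenge\mathrm{T})$ avoids having to separately certify that $\pi^\gamma$ itself reaches $\mathrm{T}$, and it remains valid even in the degenerate case $\gamma = 0$, where the bound simply forces $V^{\pi^\gamma}(b_0)=0 < P^{\pi^*}_{\mathcal{M}}(\lozenge\mathrm{T})$.

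The step I expect to be the main obstacle is not the final algebra but the single-term collapse of Eq.~\eqref{eq:probability expected total reward} into $\mathbb{E}_{\pi}[\gamma^{\tau}\mathbf{1}\{\tau<\infty\}]$: it hinges entirely on the absorbing augmentation guaranteeing that the unit reward is emitted at most once, and it must be argued at the level of the belief process so that $\tau$ is a well-defined stopping time for every policy. A related point of care is formalizing the hypothesis precisely enough that $\tau \ge 2$ holds on $\{\tau<\infty\}$ uniformly over all policies (a structural shortest-path property of the POMDP from $b_0$), since it is this uniform lower bound on $\tau$ — not any property of a single optimal trajectory — that makes $\gamma^{\tau}<1$ on the reaching event and thereby drives the strict gap.
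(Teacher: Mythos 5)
Your proof is correct and reaches the conclusion by a genuinely different decomposition than the paper's. The paper compares the two infinite sums term by term: it peels off the $t=0$ reward, drops the factors $\gamma^{t}<1$ for $t\ge 1$ to obtain the strict inequality immediately (against the \emph{undiscounted} value of $\pi^\gamma$ itself), and only then invokes optimality of $\pi^*$ for the undiscounted objective. You instead use the absorbing augmentation to collapse the return into a single discounted indicator, $V^{\pi}(b_0)=\mathbb{E}_\pi\big[\gamma^{\tau}\mathbf{1}\{\tau<\infty\}\big]$, and defer the strictness to the final step. This buys you two things. First, your chain $V^{\pi^\gamma}(b_0)\le \gamma^{2}P^{\pi^\gamma}_{\mathcal{M}}(\lozenge \mathrm{T})\le \gamma^{2}P^{\pi^*}_{\mathcal{M}}(\lozenge \mathrm{T})<P^{\pi^*}_{\mathcal{M}}(\lozenge \mathrm{T})$ needs only $P^{\pi^*}_{\mathcal{M}}(\lozenge \mathrm{T})>0$, whereas the paper's first (strict) inequality silently requires that $\pi^\gamma$ itself collects reward at some $t\ge 1$ with positive probability --- if $\pi^\gamma$ never reaches $\mathrm{T}$, that step degenerates to ``$0<0$'' even though the proposition's conclusion still holds; your routing through $P^{\pi^\gamma}_{\mathcal{M}}(\lozenge\mathrm{T})\le P^{\pi^*}_{\mathcal{M}}(\lozenge\mathrm{T})$ sidesteps this. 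Second, the quantitative bound $V^{\pi^\gamma}(b_0)\le \gamma^{2}P^{\pi^*}_{\mathcal{M}}(\lozenge \mathrm{T})$ is exactly the ``$\gamma^{n}p$'' worst-case observation the paper makes informally right after the proposition, so your argument proves slightly more. Both proofs rest on the same informal reading of the hypothesis --- $\mathrm{T}$ is reachable with positive probability but not within one step, uniformly over policies --- so the caveats you flag (that $\tau\ge 2$ must hold policy-uniformly and that positivity of the optimal reachability probability is presupposed) are the right ones and are no stronger than what the paper itself implicitly assumes.
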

\begin{proof}
    Let $\pi^*$ and $\pi^\gamma$ be the policies that maximize Eq.~\eqref{eq:probability expected total reward} with $\gamma = 1$ and $\gamma < 1$, respectively.  Then, 
    \begin{align*}
            V^{\pi^{\gamma}}&(b_0) = R(b_0,\pi^{\gamma}(b_0)) + \mathbb{E} \Big[ \sum_{t=1}^{\infty} \gamma^{t} R\left(b_{t}, \pi^{\gamma}(b_{t})\right) \Big]\\
            &< R(b,\pi^{\gamma}(b_0)) + \mathbb{E} \Big[ \sum_{t=1}^{\infty} R\left(b_{t}, \pi^{\gamma}(b_{t})\right)  \Big]\\
            &\leq R(b,\pi^{\gamma}(b_0)) + \mathbb{E} \Big[ \sum_{t=1}^{\infty} R\left(b_{t}, \pi^*(b_{t})\right) \Big] \leq P^{\pi^*}_{\mathcal{M}}(\lozenge \mathrm{T})
    \end{align*}
\end{proof}

Consider a problem instance such that the optimal policy requires $n$ steps to reach an optimal probability of $p$. Using $\gamma < 1$ gives an optimal value of $\gamma^n p$ in the worst case. Hence, discounting can lead to arbitrarily large under-approximations.

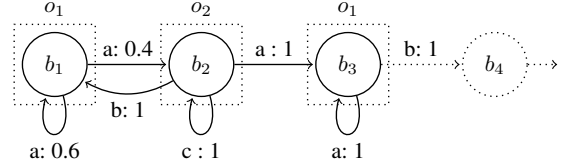
\begin{figure}
    \centering
    \scalebox{0.85}{
    \begin{tikzpicture}[->,shorten >=1pt,auto,node distance=1.25cm,semithick]
  \tikzstyle{every state}=[draw,circle,minimum size=1cm]
  \node[state] (1) {$b_1$};
  \node[state, right=of 1] (2) {$b_2$};
  \node[state, right=of 2] (3) {$b_3$};
  \node[state, dotted, right=of 3] (4) {$b_4$};
    \node[draw,dotted,fit=(1),label=above:$o_1$] {};
  \node[draw,dotted,fit=(2),label=above:$o_2$] {};
  \node[draw,dotted,fit=(3),label=above:$o_1$] {};
  \path (1) edge[loop below] node {a: 0.6} (1)
            edge node {a: 0.4} (2)
        (2) edge node {a : 1} (3)
            edge[loop below] node {c : 1} (2)
         (2) edge[bend left] node {b: 1} (1)
          (3) edge[loop below] node {a: 1} (3)
         (3) edge[dotted] node {b: 1} (4);
     \draw[dotted] (4) -- +(1,0);
    \end{tikzpicture}
    }
    \caption{Belief MDP with loops that HSVI2 is ineffective on. The lower and upper bound values are initially $[0, 1]$ for all belief states. $b_4$ is initially not yet explored.    }
    \label{fig: counterexample}
\end{figure}

\begin{figure*}[ht!]
    \centering
    \includegraphics{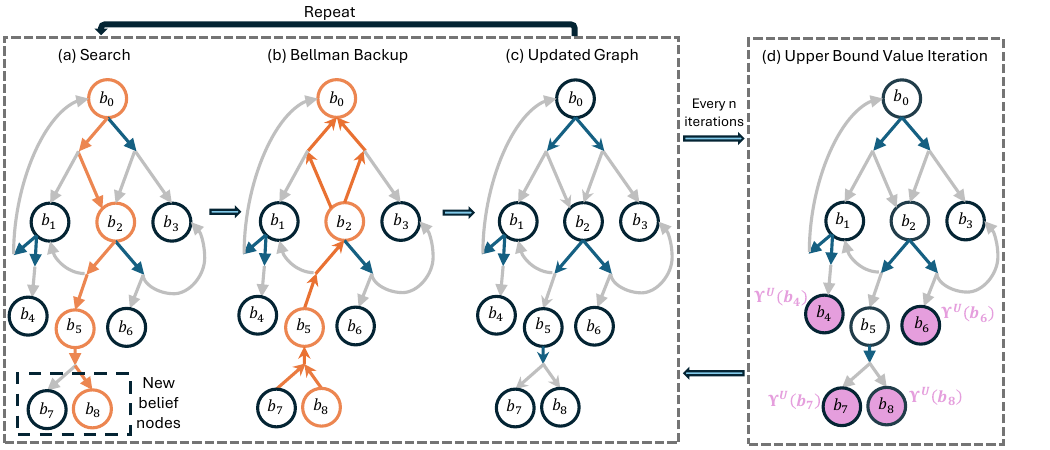}
    \caption{Overview of HSVI-RP. The algorithm incrementally constructs a belief \textit{graph} using trial-based search, and maintains upper and lower bound values for each belief node. In each trial, actions and observations are selected using a heuristic based on the upper and lower bound values to visit a sequence of belief nodes (orange). After each trial, value bounds for each visited node are updated using local Bellman backups. Every $n$ iterations, upper bound values for all nodes are reset, and frontier nodes (purple) are re-initialized using the upper bound value set $\Upsilon^U$, and value iteration is performed. This allows better improvement of upper bound values for MRPP.}
    \label{fig:overview}
\end{figure*}

\paragraph{Trials may not terminate for $\gamma = 1$.} When $\gamma < 1$, $\epsilon \gamma^{-t}$ is a strictly increasing and unbounded function, in which case algorithm HSVI2 is guaranteed to converge. However, when $\gamma = 1$, HSVI2 may not terminate for some POMDPs. Unsurprisingly, a similar phenomenon is shown to exist for Goal-POMDPs \citep{Horak2018GoalHSVI}, which is also an indefinite horizon problem. Consider the belief MDP in Figure~\ref{fig: counterexample}, with some initialized value function bounds (e.g., with the blind policy for the lower bound \citep{kochenderfer2022algorithms} and the $V_{MDP}$ method for the upper bound \citep{hauskrecht2000value}). Starting from $b_1$, Eq. \eqref{eq:iemax} chooses action $a$, and selects observation $o_1$ since it has the largest WEU, returning to $b_1$. The trial will thus be stuck at $b_1$ indefinitely, since the termination condition is never met.

\paragraph{Loops.} In a POMDP, it is possible to choose actions and observations such that the same belief states are repeatedly reached, i.e., a loop. This is depicted in Figure~\ref{fig: counterexample}, where taking action $a$ at $b_1$ and $b$ at $b_2$ is a policy that allows the agent to remain at $b_1$ and $b_2$. Without properly accounting for loops, the algorithm may either get stuck in a loop indefinitely during exploration, or can be otherwise ineffective due to repeatedly exploring the same sequences of beliefs. 

In a discounted POMDP, the IE-MAX heuristic of HSVI2 performs well, since an action with the highest upper bound will be revealed to be suboptimal if its upper bound eventually decreases below the upper bound of another action. However, in MRPP, due to the presence of loops, many actions may have similar or the same upper bound values at a given belief. The IE-MAX heuristic may repeatedly choose the same actions and be stuck in a loop indefinitely, and new beliefs at the frontier may not be expanded to improve the upper bounds. For example, in the Belief MDP in Figure~\ref{fig: counterexample}, from $b_3$, taking actions $a$ and $b$ has the same value, so $a$ may be chosen and $b_4$ is never expanded. Further, trial-based local Bellman backups for MRPP may not converge for upper bounds due to the presence of these so-called \emph{end components}.\footnote{An end component is a sub-belief-MDP with a set of belief states $B' \subseteq B$ for which there exists a policy that enforces, from any state in $B'$, only the states in $B'$ are visited infinitely often.}. We discuss this in detail in Section~\ref{sec:exactvalueiteration}.

\section{Heuristic Graph Search Value Iteration}

In this section, we present our algorithm, called Heuristic Search Value Iteration for Reachability Probabilities (HSVI-RP). The algorithm explores the search space by incrementally constructing a belief MDP graph $G$. The nodes in $G$ are allowed to have have multiple parents to alleviate the aforementioned issue of loops in trial-based search. 
The initial node of $G$ is the initial belief $b_0$. 
From $b_0$, the algorithm incrementally unrolls a finite fragment of the full belief MDP through trial-based search, and maintains sound two-sided bounds on maximal reachability probability. The two-sided bounds are used to inform the direction of search, detect $\epsilon$-optimality, and bound the optimal solution when used in an anytime manner. These bounds also have the benefit that bound improvements in one part of the belief space also improve the bounds in other parts of the belief space.

HSVI-RP is outlined in Algorithm~\ref{alg:HSVI-RP} and depicted in Figure~\ref{fig:overview}. At each iteration, a depth-first trial is conducted. An action is heuristically selected at belief $b$, and all successor beliefs $b'$ and their transitions are added to the graph. To select the next belief in the trial, an observation is heuristically selected. When a trial is terminated, we perform Bellman backups over the selected belief nodes. We also perform exact value iteration on $G$ periodically to improve upper bounds. The key is to search the belief MDP efficiently by expanding belief nodes that may be part of the reachable space of an optimal policy. To do so, we maintain and update a set of upper and lower bounds. Each graph node has an associated upper and lower bound value computed from these sets. Then, we propose new search heuristics that take advantage of these bounds and the structure of Problem~\ref{problem}.

The algorithm can be seen as a modification of discounted POMDP trial-based search to address their drawbacks for MRPP. The following summarize our key modifications: 
\begin{itemize}
    \item We do not use discounting. Instead of terminating trials with $\gamma^{-t}$, we use an adaptively increasing search depth to incrementally increase the depth of explored beliefs.
    \item We represent the search space as a graph by merging belief states that already exist in the graph. 
    \item We propose new trial-based expansion heuristics to handle indefinite horizon graph search.
    \item To enable the improvability of two-sided bounds while maintaining tractability, we use a combination of local Bellman backups and exact value iteration on $G$.
\end{itemize}
Below, we provide details on the algorithm.

\begin{algorithm}[t]
    \caption{\texttt{HSVI-RP($\mathcal{M}, \epsilon, \mathrm{T}$)}}
    \label{alg:HSVI-RP}
    \textbf{Global}: $\mathcal{M}, G, V^L, \Upsilon^U$
    \begin{algorithmic}[1]
    \STATE Initialize $G$ with $b_0$
    \STATE Initialize $V^L$ with blind policy
    \STATE Initialize $V^U = \Upsilon^U$ with $V_{MDP}$
    \WHILE{$V^U(b_0) - V^L(b_0) > \epsilon$}
            \FOR{n iterations}
                \STATE \texttt{EXPLORE}($b_0, 0, V^U(b_0) - V^L(b_0), d_{\text{trial}}$)
            \ENDFOR
            \STATE $\forall b \in G, V^U(b) = V^L(b)$
            \STATE $\forall b \in F, V^U(l) = \Upsilon^U(b)$
            \STATE Perform Value Iteration for Upper Bounds on belief MDP $G$ to obtain a new $\Upsilon^U$
            \STATE Increase $d_{\text{trial}}$ if no improvements for $i$ iterations
        \ENDWHILE
        \STATE \textbf{return} $G, V^L, \Upsilon^U$
    \end{algorithmic}
\end{algorithm}

\begin{algorithm}[t]
    \caption{\texttt{EXPLORE}($b, t, \epsilon, d_{\text{trial}})$}
    \label{alg:Explore}
    \textbf{Global}: $\mathcal{M}, G, \Upsilon^U, V^L, \kappa$
    \begin{algorithmic}[1]
        \IF{$V^U(b) - V^L(b) \leq \kappa\cdot \epsilon \;\; \textbf{ or } \;\; t > d_{\text{trial}}$}
            \STATE \textbf{return}
        \ENDIF
        \STATE $A' \gets \{a \; : \; \underset{a'}{\max} \,Q^U(b,a') - Q^{U}(b,a) < \xi\}$
        \STATE $a^* \gets \underset{a \in A'}{\argmax}\left[Q^U(b,a) + {c_{a}\sqrt{N(b)}}/{\big(1 + N(b,a)\big)} \right]$
        \STATE $o^* \gets \underset{o}{\argmax} \left[\text{WEU}(b, t, \epsilon) + P(o|b, a^*) \frac{c_z \sqrt{N(b,a^*)}}{1 + N(b')} \right]$
        \STATE Add all $b'$ from taking $a^*$ at $b$ to belief MDP graph $G$
        \STATE Update $b_{t+1}$ using $a^*, o^*$ with Eq.~\eqref{eq:beliefupdate}
        \STATE \texttt{EXPLORE}($b_{t+1}, t + 1, \epsilon, d_{\text{trial}}$)
         \STATE Perform local updates on bounds $\Upsilon^U, V^L$ at belief $b$
         \STATE \textbf{return}
    \end{algorithmic}
\end{algorithm}

\paragraph{Lower and Upper Bounds}
An important reason for the effectiveness of trial-based search is the use of bounds that allow improvements in one part of the belief space to improve the bounds in other parts of the belief space. Thus, we utilize bound representations that have this property.

We use a set $\Gamma$ of $\alpha$-vectors for lower bound representation. To initialize sound lower bounds, we use the blind policy \citep{kochenderfer2022algorithms} by taking some $i \geq 0$ number of steps, yielding lower bound on reachability probabilities. This set of $\alpha$-vectors also represents the policy for execution. Since $V^*(b) \geq V^{\pi}(b) = \max_{\alpha \in \Gamma}(\alpha^T b)$, the action at belief $b$ is chosen using $\arg\max_{\alpha \in \Gamma}(\alpha^T b)$.

We use a belief point set $\Upsilon^U$ to represent upper bounds. The upper bound value $V^U(b)$ at any belief $b$ is the projection of $b$ onto the convex hull formed by $\Upsilon^U$ of belief-value points $(b_i, V^U(b_i)$. We denote this projection as $\Upsilon^U(b)$, where $V^U(b) = \Upsilon^U(b)$. To initialize sound upper bounds, we use the $V_{MDP}$ method \citep{hauskrecht2000value}, which uses optimal values obtained on the fully observable underlying MDP. The MDP optimal value function provides values at the corners of the belief simplex, which are the initial points in the upper bound point set. These bounds can be further improved using the $Q_{MDP}$ or Fast Informed Bound methods. An upper bound for a belief is computed using an LP or a sawtooth approximation \citet{hauskrecht2000value}.

\paragraph{Value Updates}

The Bellman update equation allows us to update and improve the bounds through dynamic programming. The Bellman operator $\mathbb{B}$ is defined as:
\begin{align}
    Q(b,a) &= R(b,a) + \mathbb{E}[V(b_{t+1})],\\
    [\mathbb{B}V](b) &= \max_a Q(b,a) \;\;\;\forall b \in B.
\end{align}
$\mathbb{B}$ is defined over the entire belief space. For discounted POMDPs, it has been shown that performing an asynchronous local Bellman update over the belief states sampled in each trial is more efficient:
\begin{align}
    [\mathbb{B}V](b) &= \max_a Q(b,a) \quad \forall b \in B_{trial} \subseteq B.
\end{align}
\noindent Our trial backup step performs asynchronous local Bellman update for both lower and upper bounds. Each application of $\mathbb{B}$ on a belief state adds an $\alpha$-vector to $\Gamma$ \citep{Trey2004HSVI}, and updates the upper bound point set. As more of the belief space is explored during graph search, successive local updates leads to uniform improvement in the lower bounds and propagates improvements in the upper bound. Asynchronous local backups over trial sampled beliefs are effective for improving lower bounds.

\paragraph{Periodic Exact Upper Bound Value Iteration}
\label{sec:exactvalueiteration}

Unlike for discounted POMDPs, local backups over the upper bound point set may not lead to improvements of the upper bounds. Bellman backups over an upper bound for a POMDP may never improve because the Bellman operator for MRPP (which is reducible to the stochastic shortest path problem) is a semicontractive model \citep{bertsekas2022abstractdp}. Consequently, there may be many fixed points for value function upper bound, with the optimal upper bound solution being the \emph{least fixed point}, denoted by 
lfp[$V^U$].

Intuitively, this issue arises when there are loops, and thus states in end components may have upper bound values higher than lfp$[V^U]$. 
This may cause a Bellman update to not decrease the upper bound value. Consider the example in Figure~\ref{fig: counterexample} again. If $b_4$ is added to the graph, and the upper bound value of $b_3$ is updated via Bellman backup to a value below $1$, the backup for upper bounds at $b_2$ (and hence $b_1$) will not decrease in value because action $b$ gives the highest upper bound value of $1$. When using local backups over the upper bound set, backups over belief states that are in an end component may not improve their upper bound values.

In finite state MDPs, when initialized with a suitable under-approximate value function, value iteration (VI) converges to lfp$[V]$ \emph{from below} \citep{hartmanns2020optimistic}. By treating the frontier nodes of the partially explored belief MDP as an ``upper bound target set'' and via a suitable initialization, we can achieve a similar result for upper bounds for a given $G$.

Let $F$ be the set of frontier nodes of $G$. An upper bound on the maximal probability of reaching $\mathrm{T}$ by first going to a belief node $b \in F$ is
\begin{equation*}
    P^{\pi^*}_{G}(\lozenge \mathrm{T}) = \max_{\pi}\{\mathbb{E}_{b \in F}[P^{\pi}_{G}(\lozenge b) + V^U(b)]\} \geq P^{\pi^*}_{M}(\lozenge \mathrm{T}).
\end{equation*}
Thus, this reduces to computing maximal values, given upper bound values of the frontier nodes. 

What remains is a suitable initialization. 
Intuitively, we want to fix $V^U$ for the frontier nodes, and under-approximate it for all the other nodes.
Hence, for each $b \in F$, we set $V^U(b)$ to $\Upsilon^U(b)$.   
For all other nodes, the upper bound values are set to an under-approximation, such as their lower bounds, i.e., $V^U(n) = V^L(n)$. Then, VI of the upper bound values over $G$ until convergence obtains a new upper bound point set $\Upsilon^{U'}$, which is the least fixed point for $G$ and $\Upsilon^U$.

While VI is crucial to improve upper bounds, it has large computation overhead as VI has to be conducted for all nodes in the belief graph. Therefore, we periodically re-initialize the upper bound values to re-compute lfp$[V^U]$ over $G$ as more beliefs are added to $G$. This allows continual improvement of the upper bound over iterations by reusing the least fixed point from previous iterations instead of starting from a loose upper bound. As more of the belief space is expanded, the upper bound values improve towards $P^{\pi^*}_{M}(\lozenge \mathrm{T})$. 

\paragraph{Trial-based Graph Exploration}

Here, we present a trial-based belief exploration technique modified for graphs in MRPP. We also propose a technique to handle loops.

\textit{Action Selection: \quad}
As discussed in Section~\ref{sec: HSVI2problems}, many actions may have the same upper bound value, so HSVI2's action selection method is ineffective. We propose an action selection heuristic based on the Upper Confidence Bound (UCB) \citep{coquelin2007bandit}, considering the upper bound $Q$ values plus a term based on the number of times that action has been selected. We additionally only consider actions that have upper bound values within some user-specified action selection radius $\xi$ from the highest upper bound:
\begin{align}
    \label{eq:actionselection}
    &A' = \{a \; : \; | Q^{U}(b,a) - \arg\max_{a'}Q^U(b,a') | < \xi\},\\
    &a^* = \arg\max_{a \in A'}\left[Q^U(b,a) + {c_{a}\sqrt{N(b)}}/{(1 + N(b,a))} \right] \nonumber
\end{align}
where $c_a$ is an exploration constant, and $N(b)$ and $N(b,a)$ are the number of times $b$ has been visited, and action $a$ has been chosen at $b$ respectively. This heuristic incentivizes exploration of other actions that have similar upper bounds. A lower $\xi$ favor actions with higher upper bounds, improving upper bounds faster, but may reduce efficiency by limiting exploration.

\textit{Observation Selection: \quad}
Similarly, for observation selection, just using Eq.~\eqref{eq:HSVI2heuristic} is ineffective, as the same sequence of observations may be repeatedly chosen at a belief. Instead, we use a heuristic that is weighted based on the Excess Uncertainty, probability of reaching that observation, and number of times $N(b')$ the resulting belief has been chosen.
\begin{align}
    \label{eq:observationheuristic}
    o^* \gets \arg\max_{o} \left[\text{WEU}(b, t, \epsilon) + P(o|b, a^*) \frac{c_z \sqrt{N(b,a^*)}}{1 + N(b')} \right]
\end{align}
where $c_z$ is an exploration constant, and $N(b,a^*)$ is the number of times action $a^*$ has been chosen at node $b$. This heuristic incentivizes choosing successor belief states that have not been explored often.

Higher values of of $c_a$ and $c_z$ encourage more exploration but can hinder convergence if they are set too high due to too much exploration and too little exploitation.

\begin{remark}[Observation Heuristic Randomization]
    \label{remark:observation mixing}
    A heuristic that mixes between Eq.~\eqref{eq:HSVI2heuristic} and Eq.~\eqref{eq:observationheuristic} also works well empirically. When randomization is used, with probability $p$, we randomize between using Eq.~\eqref{eq:HSVI2heuristic} and \eqref{eq:observationheuristic}.
\end{remark}

Additionally, to address the problem of loops in graph search, we also keep track of the beliefs, actions, and observations that have been sampled during a trial, to not repeatedly choose the same sequences of actions and observations. When an action is selected, we only consider observations that do not lead to beliefs that are already sampled during the trial. Such beliefs are part of loops. If no observations are available from that action, we avoid selecting that action, and consider another action instead. If no more actions are available, we skip to the next belief in the sampled sequence that is part of the loop, and continue the trial. The trial ends if all beliefs have no actions available. Alternatively, one can maintain a global history list to not select the same histories more than once, but maintaining this history list may be ineffective if many histories end up in the same beliefs, and can be computationally demanding due to the number and length of histories in an indefinite horizon problem.

\textit{Adaptive Trial Termination: \quad}
We define a maximum depth $d_{\text{trial}}$ for each trial, that is increased adaptively as the number of iterations increase. We use a simple heuristic to increase $d_{\text{trial}}$. $d_{\text{trial}}$ is increased by $d_{\text{inc}}$ when neither the bounds have changed by at least $0.01$ over $n$ successive trials. A trial terminates when either of two conditions hold: $t > d_{\text{trial}}$ or $V^{U}(b_t) - V^{L}(b_t) \leq \kappa\cdot(V^U(b_0) - V^L(b_0))$ for $0 < \kappa < 1$. Parameters $d_{trial}$ and $d_{inc}$ control the rate of increase of search depth. Higher values are beneficial for long horizon problems but may slow search efficiency if increased too quickly.

\paragraph{Pruning}
The size of the $\alpha$-vector set affects backups significantly. To keep the problem tractable as more of the search space is expanded, we prune dominated elements in the lower bound $\alpha$-vector set in a manner similar to HSVI2. $\alpha$-vectors are pruned when they are pointwise dominated by other $\alpha$-vectors. Pruning is conducted when the size of the set has increased by $10\%$ since the last pruning operation.
\subsection*{Theoretical Analysis}

Here, we analyze the theoretical properties of HSVI-RP without observation heuristic randomization; specifically its soundness and asymptotic convergence.

Although our algorithm is inspired by the trial-based HSVI2, its properties are different due to the indefinite horizon property of MRPP and the modifications made to HSVI-RP. The proof for $\epsilon$-convergence of HSVI2 relies on discounting to bound the required trial depths and number of iterations. Additionally, loops are not an issue due to discounting. On the other hand, HSVI-RP's asymptotic convergence of the lower bound for MRPP stems from our proposed graph representation, termination criteria, and trial-based expansion heuristics, which allow adequate exploration of the belief MDP.

\begin{lemma}[Soundness]
    \label{lemma:soundness}
    At any iteration of HSVI-RP, it holds for all $b_t \in B$ that $V^L(b_t)  \leq V^*(b_t) \leq V^U(b_t)$.
\end{lemma}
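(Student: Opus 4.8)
The plan is to prove the two inequalities separately by induction over the operations that modify the bounds, relying throughout on the monotonicity of the Bellman operator $\mathbb{B}$ and on the fact that $V^*$ is a fixed point, $\mathbb{B}V^* = V^*$. First I would establish the base case from the initialization (lines~2--3 of Algorithm~\ref{alg:HSVI-RP}): the blind policy is a concrete (sub)policy, so its value is pointwise dominated by the supremum $V^*$, giving $V^L = \max_{\alpha\in\Gamma}\alpha^\top b \le V^*$; and the $V_{MDP}$ relaxation assigns to each corner of the simplex the value of the fully observable underlying MDP, which dominates the partially observable value because additional state information can never decrease the maximal reachability probability, giving $V^U = \Upsilon^U(b)\ge V^*$. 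I would then show that each subsequent update preserves the respective inequality.

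For the lower bound, the only operations are the local Bellman backups (which append $\alpha$-vectors) and pruning. Each appended $\alpha$-vector is the result of applying $\mathbb{B}$ to the current lower-bound representation; since $V^L \le V^*$ by the inductive hypothesis and $\mathbb{B}$ is monotone, $\mathbb{B}V^L \le \mathbb{B}V^* = V^*$, so the new $\alpha$-vector stays below $V^*$ and the pointwise maximum over $\Gamma$ remains a valid lower bound. Pruning only discards pointwise-dominated $\alpha$-vectors, which cannot raise $V^L$, so soundness is trivially maintained. (Equivalently, each $\alpha$-vector is the value of a conditional plan and is therefore $\le V^*$ pointwise.)

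For the upper bound I would treat the two kinds of updates. Local backups over the point set $\Upsilon^U$ apply $\mathbb{B}$ through the sawtooth/LP projection; since the projection yields a valid over-approximation of the value implied by the point set and $V^U \ge V^*$ by hypothesis, monotonicity gives $\mathbb{B}V^U \ge \mathbb{B}V^* = V^*$, so the updated upper bound stays sound. The delicate case is the periodic exact value iteration (lines~8--10). Here interior nodes are reset to the under-approximation $V^L$ and the frontier nodes $b\in F$ are fixed at $\Upsilon^U(b)$, which are valid upper bounds on the tail reachability from those beliefs by the inductive hypothesis; VI over $G$ then converges from below to the least fixed point $\mathrm{lfp}[V^U]$ on $G$. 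I would argue this least fixed point dominates $V^*$ via the reachability-decomposition identity already recorded in the text,
\begin{equation*}
    P^{\pi^*}_{G}(\lozenge \mathrm{T}) = \max_{\pi}\{\mathbb{E}_{b \in F}[P^{\pi}_{G}(\lozenge b) + V^U(b)]\} \geq P^{\pi^*}_{M}(\lozenge \mathrm{T}),
\end{equation*}
applied nodewise: any policy of $M$ either reaches $\mathrm{T}$ within the explored fragment or exits through some frontier belief $b\in F$, after which its residual reachability is bounded by the sound frontier value $\Upsilon^U(b)$; the quantity VI computes on $G$ is exactly the supremum of this expression, so its fixed point is $\ge V^*$ at every node, and the new $\Upsilon^{U'}$ (this converged fixed point) restores $V^U \ge V^*$.

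The main obstacle is precisely this upper-bound VI step. Two points need care: first, that the relevant fixed point is the \emph{least} one and still over-approximates $V^*$ (the semicontractive/SSP structure admits larger fixed points inflated by end components, so one must confirm the least one remains above the true optimum, which the frontier-decomposition argument supplies); and second, that VI reaches it \emph{from below}, so transient iterates can temporarily fall below $V^*$. I would resolve the latter by noting that the lemma concerns the maintained bound values at the completion of each outer iteration—the ``official'' upper bound remains the previous sound $\Upsilon^U$ until VI has converged—so soundness is asserted at the converged value $\mathrm{lfp}[V^U]\ge V^*$ rather than at intermediate sweeps. Establishing that this least fixed point genuinely dominates $V^*$, by combining the frontier decomposition with the soundness of the frontier values from the inductive hypothesis, is the crux of the argument.
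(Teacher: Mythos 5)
Your proposal is correct and follows essentially the same route as the paper's proof: soundness of the initialization (blind policy below $V^*$, $V_{MDP}$ above $V^*$), preservation under Bellman backups via monotonicity for both bound representations, and the frontier-decomposition argument to justify that the exact upper-bound value iteration over $G$ returns a value at least $P^{\pi^*}_{\mathcal{M}}(\lozenge \mathrm{T})$. You go somewhat further than the paper by explicitly handling pruning and the subtlety that VI converges to the least fixed point from below (so only the converged $\Upsilon^{U'}$, not transient sweeps, is asserted to be sound), but these are refinements of the same argument rather than a different approach.
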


\begin{theorem}[Asymptotic Convergence]
    \label{theorem:convergence}
    Let action selection radius in Eq.~\eqref{eq:actionselection} be $\xi = 1$. Further, let $V^L_n$ denote the lower bound obtained from HSVI-RP after trial iteration $n$. Assume that there exists an optimal belief-based policy $\pi^*$ computable with finite memory. Then,
    \begin{align*}
        \lim_{n \rightarrow \infty} \left[P^{\pi^*}_{\mathcal{M}}(\lozenge \mathrm{T}) - V_n^L(b_0)\right]  = 0.
    \end{align*}
\end{theorem}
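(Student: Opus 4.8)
The plan is to show that $V_n^L(b_0)$ is monotone and bounded and that its limit must equal $V^*(b_0) = P^{\pi^*}_{\mathcal{M}}(\lozenge \mathrm{T})$. First I would establish monotonicity: every local Bellman backup only adds $\alpha$-vectors and takes a pointwise maximum, so the represented lower bound can never decrease, and pruning removes only dominated vectors. Combined with Lemma~\ref{lemma:soundness}, which gives $V_n^L(b_0) \le V^*(b_0)$ for all $n$, the monotone bounded sequence converges to some limit $L \le V^*(b_0)$. It then remains to rule out $L < V^*(b_0)$.

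The second ingredient is an exploration-completeness argument. By the finite-memory assumption, the set $B^*$ of beliefs reachable from $b_0$ under $\pi^*$ is finite, and I would show every $b \in B^*$ is added to $G$ and backed up infinitely often. The choice $\xi = 1$ is essential: since all $Q^U$ values lie in $[0,1]$, the action-selection radius in Eq.~\eqref{eq:actionselection} excludes an action only when its upper bound is $0$, so for any belief with positive value the optimal action $\pi^*(b)$ always remains in the candidate set $A'$. The UCB-style bonuses in Eqs.~\eqref{eq:actionselection} and \eqref{eq:observationheuristic} then force every eligible action and every reachable successor belief to be selected infinitely often, because the bonus of an under-selected action or observation grows without bound relative to a repeatedly chosen one. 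Since $d_{\text{trial}}$ is increased whenever progress stalls, the per-trial depth cap does not permanently exclude any fixed-depth belief, and the loop-avoidance bookkeeping guarantees that trials terminate while still reaching new beliefs. An induction on the shortest-path depth of $b \in B^*$ from $b_0$ then yields that each such belief is visited, and hence backed up, infinitely often.

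With infinite backups in hand, I would compare the lower bound to policy-evaluation value iteration for $\pi^*$. At each $b \in B^*$ the backup takes a maximum over actions that includes $\pi^*(b)$, so the updated value satisfies $[\mathbb{B}V^L](b) \ge R(b,\pi^*(b)) + \mathbb{E}[V^L(b')] = [\mathbb{B}_{\pi^*}V^L](b)$, where $\mathbb{B}_{\pi^*}$ is the linear reachability Bellman operator for the fixed policy $\pi^*$, whose successors remain within $B^*$. Since $V^L$ is initialized below $V^*$ via the blind policy and $B^*$ is finite, asynchronous value iteration with $\mathbb{B}_{\pi^*}$ started from below converges upward to the reachability probability $V^{\pi^*} = V^*$ restricted to $B^*$, which is exactly the ``VI converges to the least fixed point from below'' behavior invoked for reachability in Section~\ref{sec:exactvalueiteration}. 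The lower bound dominates this iteration at every backup and is itself bounded above by $V^*$, so it is squeezed between two sequences both converging to $V^*$ on $B^*$. Evaluating at $b_0$ gives $L = V^*(b_0)$, as desired.

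The main obstacle I anticipate is the undiscounted convergence in the last step: because $\gamma = 1$, the reachability Bellman operator is only semicontractive, so there is no contraction modulus $< 1$ to invoke. The argument must instead rely on monotonicity together with the facts that $V^*$ is the least fixed point and the iterates are initialized below it, and must carefully transfer the synchronous VI-from-below convergence to the asynchronous, infinitely-often-updated schedule produced by the trials. Making the exploration-completeness step fully rigorous---in particular that the interaction of the depth schedule, loop avoidance, and the two UCB bonuses never starves a belief in $B^*$---is the most delicate bookkeeping, but the conceptual crux is the non-contractive convergence to the least fixed point.
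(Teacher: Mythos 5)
Your proposal is correct, and its first two ingredients---a monotone, bounded lower-bound sequence, plus exploration completeness driven by the unbounded UCB bonuses and an induction on reachable depth with $d_{\text{trial}} \to \infty$---mirror the paper's proof almost exactly. Where you diverge is the final step. The paper introduces $V^*_{G_d}(b_0)$, the optimal reachability value of the explored sub-MDP $G_d$, establishes the sandwich $V^*_{G_d}(b_0) \le V^L_{n,G_d}(b_0) \le V^*(b_0)$, and uses the finite-memory assumption to produce a depth $M$ with $V^*_{G_M}(b_0) = V^*(b_0)$, which forces the lower bound to close the gap. You instead dominate an asynchronous policy-evaluation iteration for the fixed optimal policy $\pi^*$ on its finite reachable set $B^*$ and invoke convergence of undiscounted value iteration from below to the least fixed point. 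The two are equivalent in substance, but your route makes explicit the step the paper leaves implicit: why the lower-bound fixed point on the finite sub-MDP actually attains the restricted optimal value when the Bellman operator at $\gamma = 1$ is only semicontractive. Your comparison-sequence device (same update schedule, operator dominated pointwise since the max over actions includes $\pi^*(b)$, initialization squeezed between $0$ and the least fixed point) is a cleaner way to discharge that crux than the paper's one-line appeal to the $\alpha$-vectors representing conditional plans---though to be fully rigorous you should verify that the limit of the asynchronous iterates is itself a fixed point, which is where the infinitely-often-updated schedule is actually used. Your observation that $\xi = 1$ excludes an action only when $Q^U(b,a) = 0$, which cannot happen for $\pi^*(b)$ at a positive-value belief because $Q^U \ge Q^*$, is a detail the paper's proof skips entirely and is worth keeping.
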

Proofs of Lemma~\ref{lemma:soundness} and Theorem~\ref{theorem:convergence} can be found in the Appendix.

In general, optimal belief-based policies for POMDPs with indefinite horizon require infinite memory, and the corresponding decision problem is undecidable 
\citep{MADANI2003Undecidability}. Therefore, the finite memory assumption is a practical requirement for any computed policy.

We leave the complete analysis of convergence of the upper bound to future work. It is possible to guarantee upper bound convergence in certain cases, e.g., when the POMDP induces a finite belief MDP. However, there are MRPPs in which the upper bound does not converge. These are MRPPs where lowering the upper bound requires an infinite number of explored beliefs, related to the undecidability result for indefinite horizon POMDPs. In our experimental evaluations, we found that the upper bound converges (quickly) in some problems but not in others.
\section{Empirical Evaluation}

\begin{table*}[ht!]
\centering
\caption{Performance of discounted-sum SARSOP. Bold indicates best results, and `$-$' indicates not converged at timeout ($900s$).}
\resizebox{2\columnwidth}{!}{
\begin{tabular}{l||c|c|c|c|c|c|c}
& \multicolumn{6}{c|}{\underline{ \hspace{64mm} SARSOP \hspace{63mm}}} & \multirow{2}{*}{\textbf{Ours}} \\ 
& $\gamma = 0.95$ & $\gamma = 0.98$ & $\gamma = 0.99$ & $\gamma = 0.999$ & $\gamma = 0.99999$ & $\gamma = 1 - 10^{-16}$ & \\ 
\midrule
\midrule
\multirow{2}{*}{Grid-av 4} & $[0.758, 0.758]$ & [0.857, 0.857] & [0.892, 0.892] & $[0.923, 0.923]$ & $\mathbf{[0.928, 0.928]}$ & $\mathbf{[0.928, 0.928]}$ & $\mathbf{[0.928, 0.928]}$ \\
& $<1s$ & $<1s$ &$<1s$ & $<1s$ & $\mathbf{<1s}$ & $\mathbf{<1s}$ & $\mathbf{<1s}$ \\ 
\midrule
\multirow{2}{*}{Grid-av 20} & $[0.028, 0.049]$ & $[0.155, 0.212]$ & $[0.332, 0.38]$ & $[0.709, 0.721]$ & $[0.781, 0.782]$ & [0, 1]& $\mathbf{[0.782, 0.783]}$ \\
& $-$ & $-$ & $-$ & $-$ & $<1s$ & $-$ & $\mathbf{33s}$ \\ 
\midrule
\multirow{2}{*}{Refuel6} & $[0.21, 0.21]$ & $[0.32, 0.33]$ & [0.39, 0.39] & $[0.63, 0.63]$ & $[0.20, 0.98]$ & $[0.18, 0.98]$ & $\mathbf{[0.67, 0.67]}$ \\
& $<1s$ & $<1s$ & $3.6s$ & $200s$ & $-$ & $-$ & $\mathbf{1.4s}$ \\ 
\midrule
\multirow{2}{*}{Refuel8} & $[0.184, 0.184]$ & $[0.314, 0.314]$ & $[0.374, 0.375]$ & $[0.438, 0.439]$ & $[0.218, 0.987]$ & $[0.00, 0.988]$ & $\mathbf{[0.445, 0.445]}$ \\
& $1.4s$ & $4.24s$ & $9.83s$ & $339s$ & $-$ & $-$ & $\mathbf{20s}$ \\ 
\end{tabular}
}
\label{tab:discountsumresults}
\end{table*}

\begin{table*}[ht!]
\centering
\caption{Results for benchmark POMDPs. The top entry refers to the computed values, and bottom entry (left) is the time taken to achieve that value. For belief-based approaches (PRISM, STORM, Overapp, Ours), we additionally report the number of beliefs on the bottom entry (right). Bold entries denote the best solutions (best value, followed by lowest runtime if multiple methods achieve the same value). TO/MO denotes timeout ($2$ hours) or out of memory with no solution, and `$*$' indicates that the reported result is from previous papers.}
\scalebox{0.92}{
\begin{tabular}{c||c|c|c|c|c|c}
&  PRISM  &   STORM  &  PAYNT  &  SAYNT  & \textbf{Ours} &  Overapp \\
\midrule
\midrule
\multirow{2}{*}{Nrp8}
 & $[\mathbf{0.125}, 0.189]$ & $\geq\mathbf{0.125}$ & $\geq\mathbf{0.125}$ & $\geq\mathbf{0.125}$ & [\textbf{0.125, 0.125}] & $\leq \mathbf{0.125}$\\
 & $58s$, 735K beliefs & $\mathbf{<1s}$, 50 beliefs& $\mathbf{<1s}$ & $\mathbf{<1s}$& $\mathbf{<1s, 32}$ \textbf{beliefs} & $\mathbf{<1s}$, 50 beliefs \\\midrule
\multirow{2}{*}{Crypt4}
 & $[\mathbf{0.33}, 0.77]$ & $\geq\mathbf{0.33}$ & $\geq\mathbf{0.33}$ & $\geq\mathbf{0.33}$ & $\mathbf{[0.33, 0.33]}$ & $\mathbf{\leq 0.33}$\\
 & $33s, 312K$ beliefs & $\mathbf{<1s, 560}$ \textbf{beliefs} & $\mathbf{<1s}$ & $\mathbf{<1s}$ & $15.6s$, 480 beliefs & $\mathbf{<1s, 560}$ \textbf{beliefs} \\\midrule
  \multirow{2}{*}{Rocks12}
 & TO/MO & $0.63$ & $\geq\mathbf{0.75}$ & $\geq\mathbf{0.75}$ & $\mathbf{0.75, 0.75}$ & $\mathbf{\leq 0.75}$\\
 & & $1223s$, 2M beliefs & $5.7s$ & $5.6s$ & $\mathbf{2.8s, 770}$ \textbf{beliefs} & $\mathbf{<1s, 2.5K}$ \textbf{beliefs}\\\midrule
\multirow{2}{*}{Grid-av 4}
 & $[0.21, 1.0]$ & $\geq\mathbf{0.928}$ & $\geq\mathbf{0.928}$ & $\geq\mathbf{0.928}$ & [\textbf{0.928, 0.928}] & $\leq 0.984^*$\\
 & $<1s$, 15 beliefs & 118s, 10M beliefs & 158s & $87.2s$ & $\mathbf{<1s, 194}$ \textbf{beliefs} & 6.3s, 125K beliefs \\
\midrule
\multirow{2}{*}{Grid-av 10}
 & $[0, 0.999]$ & $\geq 0.513$ & $\geq0.744$ & $\mathbf{\geq 0.773}$ & $[\mathbf{0.773, 0.774}]$ & $\leq 1.0$ \\
 & $34s$, 97 beliefs & $75s$, 5M beliefs & $309s$ & $ 337s$ & $\mathbf{8s, 8K}$ \textbf{beliefs} & $<1s$, 5K beliefs \\
 \midrule
\multirow{2}{*}{Grid-av 20}
 & TO/MO & $\geq 0.115$ & $\geq 0.524$ & $\geq 0.667$ & $\mathbf{0.782, 0.783}$ & $\leq 1.0$\\
  & & $77s$, 5M beliefs & $1156s$ &  $1986s$ & $\mathbf{33s, 12K}$ \textbf{beliefs} & $1.2s, 75K$ beliefs \\
\midrule 
\multirow{2}{*}{Drone 4-1}
 &TO/MO & $\geq 0.839$ & $\geq 0.869$ & $\mathbf{\geq 0.890}$ & $[0.884, 0.957]$ & $\mathbf{\leq 0.942^*}$\\
 & & $210s$, 6.5M beliefs & 2509s & $\mathbf{427s}$ & $7200, 51K$ beliefs  & 1270s, 14M beliefs\\
\midrule
\multirow{2}{*}{Drone 4-2}
 & TO/MO & $\geq 0.953$ & $\geq 0.963$ & $\geq \mathbf{0.971}$ & $[0.964, 0.976]$ & $\leq \mathbf{0.974}$\\
  & & $207s$, 8.8M beliefs & $6815s$ & \textbf{733s} & $7200s$, 26K beliefs & $\mathbf{44s, 762K}$ \textbf{beliefs}\\
\midrule
\multirow{2}{*}{Refuel6}
 & $[\mathbf{0.67}, 0.72]^*$ & $\geq \mathbf{0.672}$ & $\geq \mathbf{0.672}^*$ & $\geq \mathbf{0.672}$ & $[\mathbf{0.672, 0.672}]$ & $\leq 0.687$\\
 & $136s$, 6K beliefs & $1.4s$, 4.5K beliefs & $77.8s$ & $85s$ & $\mathbf{1.4s, 387}$ \textbf{beliefs} & $<1s$, 48K beliefs\\
\midrule
\multirow{2}{*}{Refuel8}
 & TO/MO & $\geq 0.439$ & $\geq \mathbf{0.445}$ & $\geq \mathbf{0.445}$ & $[\mathbf{0.445, 0.446}]$ & $\leq 0.509^*$\\
 & & $1.7s$, 20K beliefs & $494s$ & 91s & $\mathbf{20s}$, 3.7K beliefs & $410s$, 11M beliefs\\
\midrule
\multirow{2}{*}{Refuel20}
 & TO/MO & $\geq 0.144$ & $\geq 0.018$ & $\geq 0.204$ & $[\mathbf{0.328, 0.999}]$ & $\mathbf{\leq 0.999}$\\
 & & $142s$, 3.9M beliefs & $1666s$ & 937s & $\mathbf{1356s, 32K}$ \textbf{beliefs} & $\mathbf{7.56s}$, 177K beliefs
\end{tabular}
}
\label{tab:fullresults}
\end{table*}

In this section, we evaluate our proposed algorithm. We aim to answer the following questions in our evaluation.
\begin{itemize}
    \item[\textbf{Q1.}] \emph{How well do discounted trial-based POMDP algorithms perform for MRPP?} We study the effect of discounting on solution quality for indefinite horizon reachability. We use SARSOP \cite{Kurniawati-RSS08-SARSOP} together with \cite{Bouton2020PointBasedModelChecking}, with varying levels of discount factor.

    \item[\textbf{Q2.}] \emph{How does our approach compare to state-of-the-art belief-based approaches?} We compare our approach to those by PRISM \citet{norman2017verification}, STORM \citet{Bork2022underapproximating} and Overapp \citet{Bork2020overapp}. PRISM computes two-sided bounds using a grid-based discretization to approximate the belief MDP. STORM and Overapp compute lower and upper bounds, respectively, in a breadth first search manner.

    \item[\textbf{Q3.}] \emph{How does our approach compare to other state-of-the-art approaches?} We compare our approach to PAYNT \citet{Andriushchenko2022InductivePOMDP} and SAYNT \citet{andriushchenko2023symbiotic}. PAYNT is an inductive synthesis algorithm that searches in the space of (small-memory) FSCs. SAYNT is an algorithm that integrates both PAYNT and STORM by using the FSCs computed from one to improve the other in an anytime loop. 
\end{itemize}

\textbf{Benchmarks and Setup \quad} We implemented a prototype of HSVI-RP in Julia under the POMDPs.jl framework~\citep{egorov2017pomdps}, and used the available open source toolboxes for the other algorithms. We use benchmark MRPPs from~\citep{Bork2022underapproximating}, with variants on size and difficulty. Details on the problems can be found in the Appendix. For HSVI-RP, observation heuristic randomization (Remark~\ref{remark:observation mixing}) with $p = 0.5$ was used for the Drone problems, and we report the mean of the bounds obtained from $10$ runs. All experiments were run on a single core of a machine equipped with an Intel i7-11700K @ 3.60GHz CPU and $32$ GB of RAM. Our code is open sourced and available on GitHub\footnote{ \url{https://github.com/aria-systems-group/HSVI-RP}}.

\textbf{Q1: Performance of trial-based discounted-sum algorithms. \quad}
Table~\ref{tab:discountsumresults} summarizes the key results to answer Q1. From Table~\ref{tab:discountsumresults}, we see that there is not a clear way to use a discounted POMDP algorithm to get good performance for indefinite horizon maximal reachability probabilities problems. A typical discount factor used in discounted-sum POMDP problems is $0.95$ to $0.999$. Unsurprisingly, such values of discounting under-estimate the optimal probabilities. Therefore, one should increase the discount factor to as close to $1$ as possible, to get the best probabilities. However, for problems with more loops in the belief transitions, such as Refuel6 and Refuel8, increasing the discount factor causes trials to be deeper but search may not be effective (or trials may not terminate). In all cases, HSVI-RP performs as well or better than using discounted SARSOP directly.

Table~\ref{tab:fullresults} reports a set of benchmarks for algorithms that do not use discounting\footnote{These algorithms are also designed to handle minimizing or maximizing expected (positive) rewards. While it is possible to extend our approach to handle such reward structures under some assumptions, we focus on maximizing reachability probabilities in this work.} to answer Q2-Q3. PRISM and HSVI-RP provide two-sided bounds. STORM, PAYNT and SAYNT provide under-approximations, while Overapp provides over-approximations. We also provide plots of the evolution of our value bounds over time in the Appendix. We report the computed values, time taken to achieve that value, and also reports the number of beliefs expanded for the compared belief-based approaches (PRISM, STORM, Overapp, and HSVI-RP). Note that PAYNT does not expand beliefs, and although SAYNT expands beliefs, it does not report the number of beliefs expanded. As reported in \citep{andriushchenko2023symbiotic}, SAYNT typically reduces the memory usage of STORM by a factor of 3-4.

\textbf{Q2: Comparison to belief expansion-based approaches. \quad} HSVI-RP generally performs better than other belief-based approaches, exceeding the accuracy of their under- and over-approximations with faster convergence for most of the problems.  Additionally, HSVI-RP expands orders of magnitude fewer beliefs than STORM and Overapp, requiring less memory. All three compared algorithms were not able to improve their computed values much more than their best solutions due to the memory intensity of grid-based approximations and breadth-first belief expansion, while HSVI-RP was able to improve computed policies over time due to the trial-based methodology. These results strongly suggest that depth-first trial-based heuristic search that utilizes $\alpha$-vector and upper bound point set representations, is an effective belief expansion methodology for MRPP. 

\paragraph{Q3: Comparison to other approaches.}

HSVI-RP is highly competitive compared to both PAYNT and SAYNT, achieving better lower bounds in many of the problems, while also providing upper bounds. Further, HSVI-RP generally (except in the Drone problems) finds optimal solutions faster than both methods. For Refuel6, Refuel8, Grid-av 4, and Grid-av 10, SAYNT computed near-optimal solution within $377$s, but continued computations until timeout without detecting it. On the other hand, both of HSVI-RP's bounds converged for these problems, allowing termination with a near-optimal policy. Hence, the use of two-sided bounds can help inform when a near-optimal policy is found.

\textbf{Further Discussion. \quad} 
Although the efficiency of our approach is promising, we note that the rate of convergence towards the optimal probabilities can be slow for larger problems which require deep trials. From preliminary analysis, the computation time is mainly bottlenecked by Exact Upper Bound Value Iteration and Bellman backups over large numbers of $\alpha$-vectors during deep trials. Heuristic search using two-sided bounds is also less effective when the upper bound values are uninformative (as is the case for the Drone problems), possibly due to the presence of large ECs. Better model checking techniques, such as on-the-fly detection and handling of ECs, may improve belief exploration and convergence.

Similar to STORM, HSVI-RP benefits greatly from seeding with a good policy. Our policy initialization is the blind policy, which achieves a lower bound of $0$ for most problems. In contrast, SAYNT's inductive synthesis approach finds good initial policies quickly. SAYNT performs the best among these algorithms by leveraging the strengths of the belief exploration of STORM and FSC generation of PAYNT. An integrated approach with HSVI-RP and PAYNT may be a good direction for scalable and fast verification and policy synthesis.
\section{Conclusion}

This paper studies the problem of computing near-optimal policies with both lower and upper bounds for POMDP MRPP. Utilizing ideas from heuristic trial-based belief exploration for discounted POMDPs,  we propose an incremental graph search algorithm that searches in the space of potentially optimal reachable beliefs. This work shows that trial-based belief exploration can be an effective methodology to obtain near-optimal policies with over- and under-approximations of reachability probabilities.



\begin{acknowledgements}
    This work was supported by Strategic University Research Partnership (SURP) grants from the NASA Jet Propulsion Laboratory (JPL) (RSA 1688009 and 1704147).
    Part of this research was carried out at JPL, California Institute of Technology, under a contract with the National Aeronautics and Space Administration (80NM0018D0004).
\end{acknowledgements}

\bibliography{bib}




\appendix
\section{Proof of Lemma 1}

\begin{proof}
    We first show that the initial upper and lower bounds are sound. 
    
    We initialize the lower bound $\alpha$-vectors with a blind policy for $i$ steps, which is a lower bound on the maximal reachability probability. Upper bounds are initialized with the $V_{MDP}$ technique, which assumes that there will be full observability after taking the first step. Since we can only do better if we have full observability, the computed value function is an upper bound on the optimal value function  \citep{kochenderfer2022algorithms}. Therefore, the initialized bounds are sound.

    Now, we show that iterations of belief exploration and backups preserve soundness of the bounds. 

    An $\alpha$-vector obtained from a Bellman backup of an $\alpha$-vector set is proven to remain a lower bound as long as the $\alpha$-vector set is a lower bound \citep{kochenderfer2022algorithms}.

    A Bellman backup of an upper bound point $V^U$ is:
    \begin{align*}
        V^{U'}(b) &= \max_{a \in A}\big(\mathbb{E}(R(s,a)) + \sum_{o}Pr(o|b,a)V^U(\tau(b,a,o))\big)
        \geq \max_{a \in A}\big(\mathbb{E}(R(s,a)) + \sum_{o}Pr(o|b,a)V^*(\tau(b,a,o))\big)
        = V^*(b)
    \end{align*}
    i.e., the upper bound point remains an upper bound.
    
    Finally, we show that Exact Upper Bound Value Iteration preserves the upper bound property of the upper bound point set. Let $V^U(l)$ be an upper bound maximal probability of reaching $\mathrm{T}$ from $l \in L$ given $G$. The Exact Upper Bound Value Iteration computes the upper bound on the maximal probability of reaching $\mathrm{T}$ by first going to a node $l \in L$:
    \begin{align*}
        P^{\pi^*}_{G}(\lozenge \mathrm{T}) = \max_{\pi}\{\mathbb{E}_{l \in L}[P^{\pi}_{G}(\lozenge l) + V^U(l)]\} \geq P^{\pi^*}_{M}(\lozenge \mathrm{T}).
    \end{align*}
    Since Exact Upper Bound Value Iteration is initialized with a sound upper bound $\Upsilon_i^U$ at frontier nodes, convergence of value iteration implies that the new upper bound point set $\Upsilon_{i+1}^U$ is also an upper bound.
\end{proof}

\section{Proof of Theorem 1}

\begin{proof}

    Let the current trial depth be $t_{trial} = d > 1$. We show that our algorithm eventually expands all beliefs reachable within $d$ steps. 

    Consider belief $b_0$ at the root of the graph, which will always be selected during a trial. From the action selection method of Eq.~\eqref{eq:actionselection}, all actions are eventually selected infinitely often as $n \rightarrow \infty$ since the second term $c_a \cdot \frac{\sqrt{N(b)}}{1 + N(b,a)}$ is a strictly increasing function if $a$ is not selected. The observation selection method of Eq.~\eqref{eq:observationheuristic} behaves in a simlar manner, where $c_o \cdot \frac{\sqrt{N(b)}}{1 + N(b_{t+1})}$ is a strictly increasing function if observation $o$ is not selected. Therefore, all beliefs will be selected infinitely often as $n \rightarrow \infty$. Therefore, all beliefs reachable within $1$ step (depth $1$) will eventually be expanded.

    Next, during a trial, when a depth $1$ belief is reached, all actions and observations are again eventually selected infinitely often as $n \rightarrow \infty$. By induction, all beliefs reachable within $d$ steps will eventually be expanded.

    Suppose that the algorithm has searched all beliefs reachable within $d$ steps, and constructed the belief MDP $G_d$. Note that since belief MDP $G_d$ is a graph, policies over $G_d$ do not only include $d$-step trajectories, but also indefinite-horizon policies. Let $V^*_{G_d}(b_0)$ be the optimal value function for the belief MDP $G_d$ for Problem~\ref{problem}, i.e., maximal probability of reaching $s \in T$ \emph{only within $G_d$}. Thus,  
    \begin{align*}
        V^*_{G_d}(b_0) \leq V^*(b_0),
    \end{align*}
    since there may be $s \in \mathrm{T}$ reachable from $b_0$ that are not in $G_d$ (reachable within $d$ steps).
    
    Let $V^U_{0, \emptyset}, V^L_{0, \emptyset}$ be the upper and lower bounds at the initial iteration, and $V^U_{n, G_d}, V^L_{n, G_d}$ be the upper and lower bound fixed points computable with the belief MDP $G_d$ at iteration $n$, i.e., $V^L_{n, G_d}$ has converged to its fixed point (using asynchronous local updates) and $V^U_{n, G_d}$ has converged to its least fixed point (using Exact Upper Bound Value Iteration). At $b_0$, $V^L_{n, G_d}(b_0)$ upper bounds $V^*_{G_d}$, since the $\alpha$-vectors represent conditional plans that include the probability of reaching $s \in \mathrm{T}$ that are not in $G_d$,  
    \begin{align*}
        V^*_{G_d}(b_0) \leq V^L_{n, G_d}(b_0) \leq V^*(b_0) \leq V^U_{n, G_d}(b_0),
    \end{align*}

    Also, $d_{\text{trial}} \rightarrow \infty$ as $n \rightarrow \infty$. Assume that an optimal policy can be represented with a finite $N$-memory belief-based policy. Since our states, actions and observations are finite, this implies that there exists $M \geq N$ where there is a trial depth $t_{trial} = M$ such that for a finite sized belief MDP $G_M$
    \begin{align*}
         V^*_{G_M}(b_0) = V^*(b_0) \implies V^L_{n, G_M}(b_0) = V^*(b_0)
    \end{align*}

    Therefore, 
     \begin{align*}
        \lim_{n\rightarrow \infty}|V^*(b_0) - V^L_n(b_0)| = 0
    \end{align*} 
\end{proof}

The assumption that there exists a finite-memory is consistent with the results that the decision problem for POMDPs is undecidable, even in the discounted-sum case. Additionally, note that this does not only hold for POMDPS with a a finite reachable belief space, only that a finite belief MDP is sufficient to compute an optimal policy.

We remark that this proof considers the worst-case convergence of the algorithm, in which all beliefs and trajectories are expanded in an unbounded manner to reach an optimal solution. In practice, we can get near-optimal policies without needing to expand all possible nodes.

\section{Benchmark Problems}

\paragraph{Nrp8} 

This problem is a non-repudiation protocol for information transfer, introduced as a discrete-time POMDP model by \citep{norman2017verification}. The goal is to compute the maximum probability, of a malicious behavior, that a recipient $R$ is able to gain an unfair advantage by obtaining information from an originator $O$ while denying participating in the information transfer.

\paragraph{Crypt4}

This problem models the the dining cryptographers protocol as a POMDP \citep{norman2017verification}. A group of N cryptographers are having dinner at a restaurant. The bill has to be paid anonymously: one of the cryptographers might be paying for the dinner, or it might be their master. The cryptographers respect each other’s privacy, but would like to know if the master is paying for dinner. The goal is to know if the cryptographer's master is . See \citep{norman2017verification} for more details.

\paragraph{Rocks12}

The rock sample problem was considered for model checking by \citep{Bouton2020PointBasedModelChecking}. It models a rover exploring a planet, tasked with collecting rocks. However, the rocks can be either \texttt{good} or \texttt{bad} and their status is not directly observable. The robot is equipped with a long range sensor, but sensing rock states is noisy. The problem ends when the robot reaches an exit area, with the state labelled as \texttt{exit}. We consider the formula $\phi_2 = \lozenge \texttt{good} \wedge \lozenge \texttt{exit}$ from \citep{Bouton2020PointBasedModelChecking}.

\paragraph{Grid Avoid}

This is a classical POMDP problem introduced as a benchmark problem for MRPP by \citep{norman2017verification}. There is $1$ obstacle and $1$ target state in a $4 \times 4$ grid, and the goal is to reach the target state while avoiding the obstacle. In Grid-av 4-0.1, the agent has a $0.1$ probability of staying still when attempting to move to another grid. The agent has an initial belief distribution of being in any of the non-obstacle or target states. We extend the problem to a $10 \times 10 $ grid with $3$ obstacles in Grid-av 10-0.3, and the probability of staying still when attempting to move is increased to $0.3$. In Grid-av 20-0.5, there are $5$ obstacles and the probability of staying still is increased to $0.5$. In both Grid-10-0.4 and Grid-10-0.5, the agent has initial belief distribution of being in any of the non-obstacle states within the first $5 \times 5$ grid.

\paragraph{Drone}

In Drone N-R, the agent has to reach a target state in an $N\times N$ grid, while avoiding a stochastically moving obstacle. The obstacle is only visible within a limited radius $R$ \citep{Bork2020overapp}.  

\paragraph{Refuel}

In RefuelN, the agent goal is to reach a target state in an $N\times N$ grid. There is uncertainty in movement and its own position is not directly observable. There are static obstacles, and movement requires energy. The agent starts with $N-2$ energy, and each move action uses $1$ energy. Energy can be refilled at recharging stations.

\begin{table}[ht]
\centering
\begin{tabular}{|c|c|c|c|}
\hline
Model          & States & State-action pairs & Observations \\ \hline
Nrp8 & 125 & 161 & 41\\\hline
Crypt4 & 1972 & 4612 & 510\\\hline
Rocks12 & 6553 & $3 \cdot 10^4$& 1645\\\hline
Grid-av 4-0.1  & 17     & 62                 & 3            \\ \hline
Grid-av 10-0.3 & 101    & 389                & 3            \\ \hline
Grid-av 20-0.5 & 401    & 1580               & 3            \\ \hline
Drone 4-1      & 1226   & 3026               & 384          \\ \hline
Drone 4-2      & 1226   & 3026               & 761          \\ \hline
Refuel-06      & 208    & 565                & 50           \\ \hline
Refuel-08      & 470    & 1431               & 66           \\ \hline
Refuel-20      & 6834   & 25k                & 174          \\ \hline
\end{tabular}
\caption{Size of Benchmark Problems}
\label{tab: size}
\end{table}

\section{Algorithm Details and Parameters}

\paragraph{Discounted-Sum POMDP} We used the technique in \citep{Bouton2020PointBasedModelChecking} together with SARSOP \citep{Kurniawati-RSS08-SARSOP} (toolbox implementation in C++) to compute solutions. 

\paragraph{PRISM} We used the toolbox implemented by \citep{norman2017verification}. We varied the parameter \emph{resolution} and report the best results.

\paragraph{Overapp} We used the implementation in \citep{Bork2020overapp} in the toolbox STORM. We report the best results over the recommended parameters found in the paper.

\paragraph{STORM, PAYNT, SAYNT} We used the implementation of all three algorithms from the toolbox available in \citep{andriushchenko2023symbiotic}. This toolbox is implemented in C++. The STORM implementation has multiple parameter settings - cut-off, clip2, clip4, or expanding \{2, 5, 10, 20\} million belief states. We report the best results for each experiments from these parameters. We use the default parameters for PAYNT, which method searches in the space of increasing $k$-memory FSCs. We report the best result. We report the results using the parameters recommended in the toolbox for SAYNT. SAYNT outputs two values (one for STORM and one for PAYNT); we report the best value. Overall, the results are similar to those in the original publication of these algorithms. 

\paragraph{HSVI-RP} We use $c_{a} = 0.01$, $\xi = 0.1$, $c_{z} = 0.01$, initial $d_{\text{trial}} = 200, d_{\text{inc}} = 10, \kappa = 0.01$, and performed Exact Upper Bound Value Iteration every $10$ exploration trials for all experiments. For all experiments except Drone 4-1 and Drone 4-2, we used our proposed heuristic, so there is no randomization in the algorithm. For Drone 4-1 and Drone 4-2, we randomized (probability $0.5$) between our proposed heuristic and the original HSVI2 heuristic, and report the mean results over $10$ runs.

The benchmark problems have specifications in the form of co-safe LTL. We use the technique by \citeauthor{Bouton2020PointBasedModelChecking} to compute product automata and accepting product target states.

\paragraph{Evaluation Validity} This evaluation focuses on the potential for trial-based search to obtain policies with tight two-sided bounds for maximal reachabiltiy probabilities through comparisons with state-of-the-art methods. It is important to note that some of the algorithms are implemented in different toolboxes and programming languages with varying levels of code optimization. To mitigate some of the issues related to this, we conducted all experiments on the same CPU when possible. Nonetheless, we do not draw definite conclusions on the relative speed of each algorithm due to their implementation differences.

\section{Convergence Plots}

Figure~\ref{fig: evolution} plots the evolution of our two-sided bounds for the evaluated benchmarks. The dashed and dotted lines give the other algorithms' final results for comparison. We omit the bounds obtained by PRISM as they are largely uninformative.

\begin{figure*}[ht]
    \centering
    \begin{subfigure}{0.44\textwidth}
        \centering
        \includegraphics[width=\textwidth]{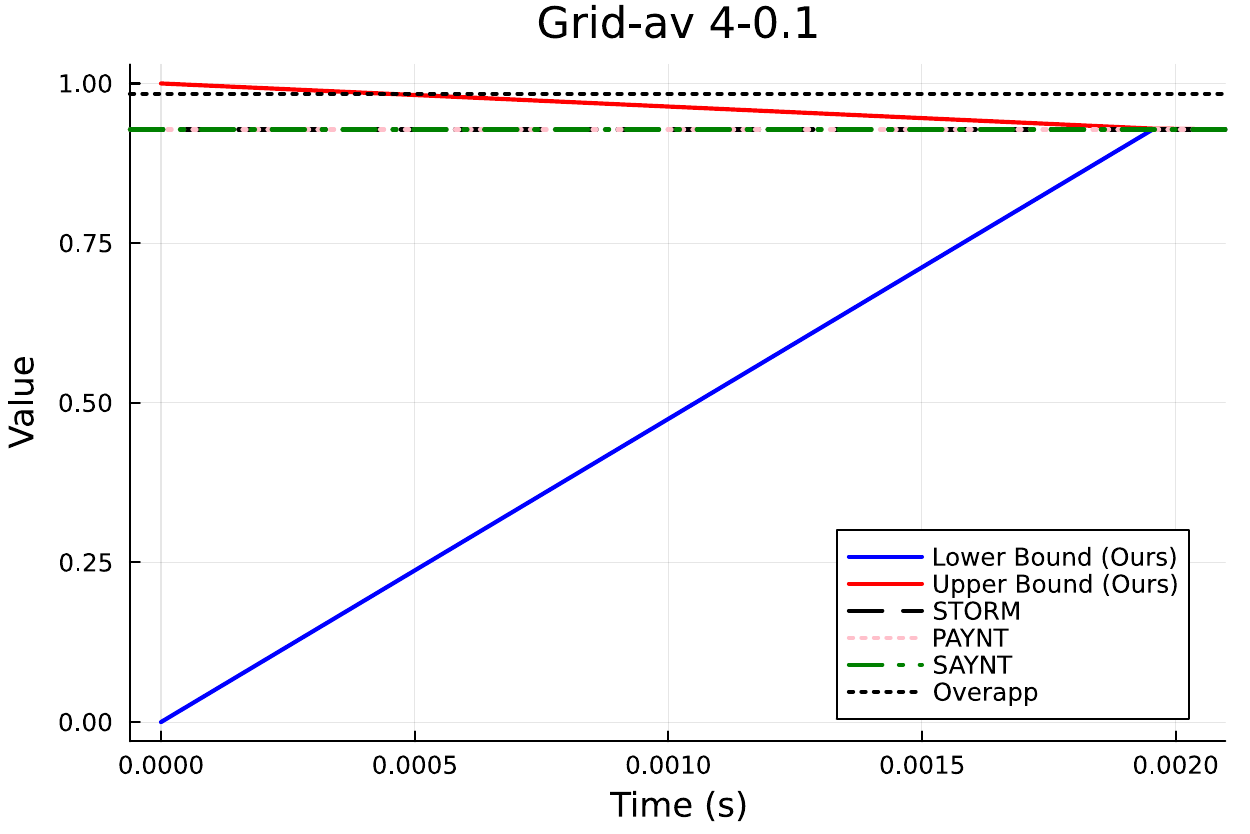}
        \label{fig:sub1}
    \end{subfigure}
    \begin{subfigure}{0.44\textwidth}
        \centering
        \includegraphics[width=\textwidth]{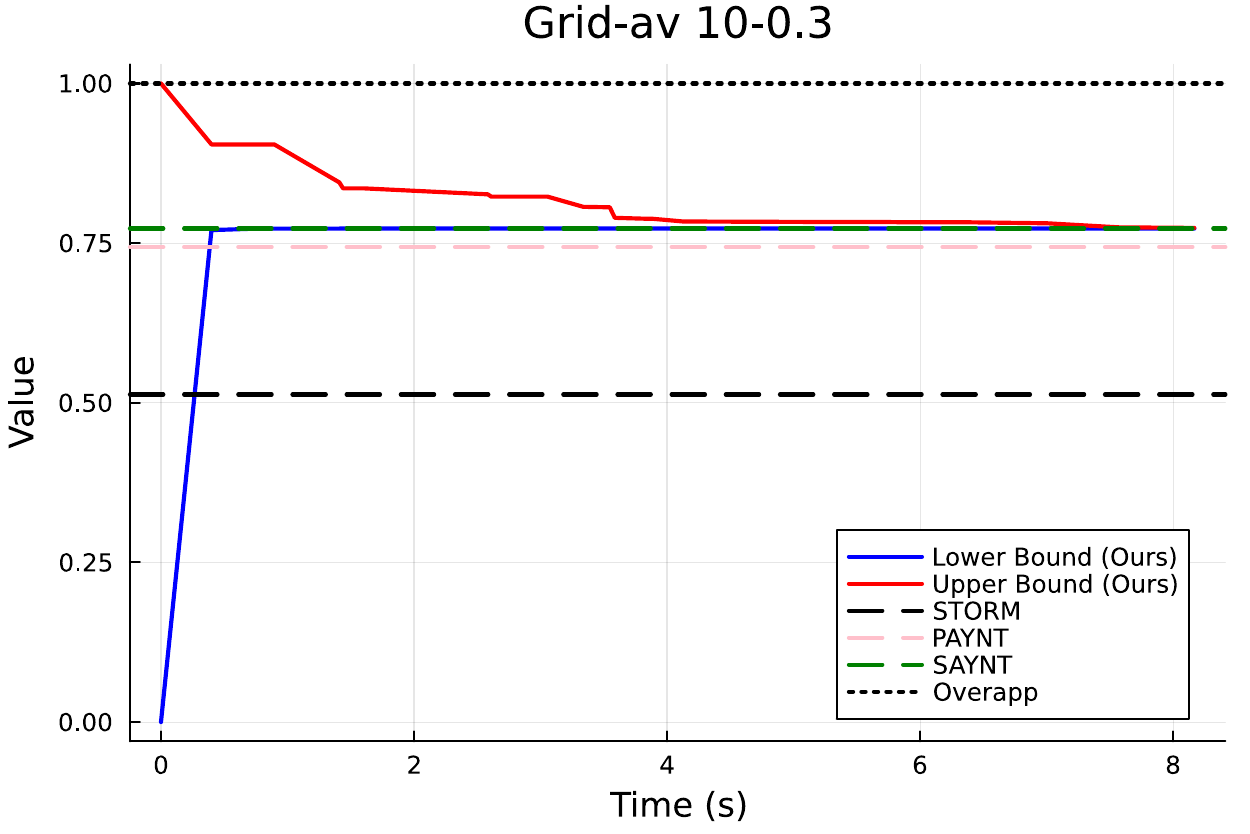}
        \label{fig:sub2}
    \end{subfigure}

    \begin{subfigure}{0.44\textwidth}
        \centering
        \includegraphics[width=\textwidth]{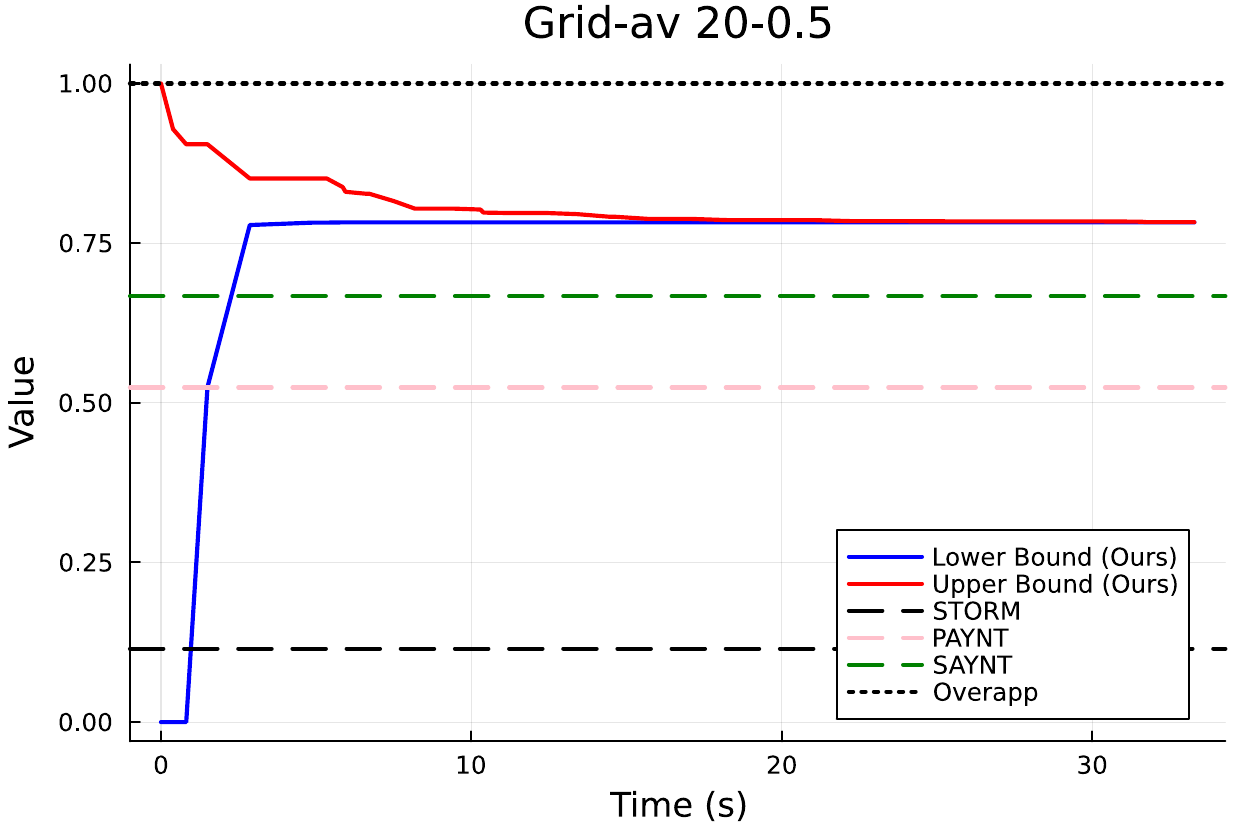}
        \label{fig:sub3}
    \end{subfigure}
    \begin{subfigure}{0.44\textwidth}
        \centering
        \includegraphics[width=\textwidth]{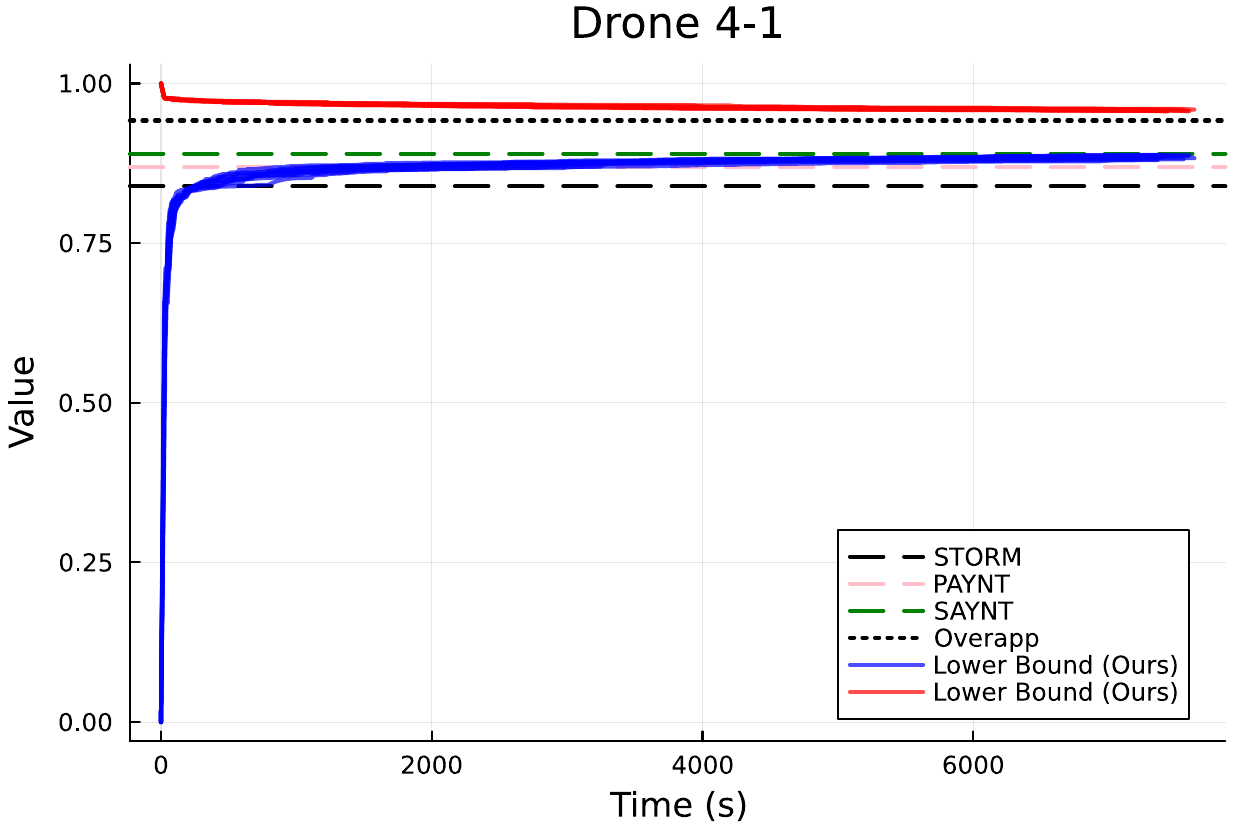}
        \label{fig:sub4}
    \end{subfigure}
    \begin{subfigure}{0.44\textwidth}
        \centering
        \includegraphics[width=\textwidth]{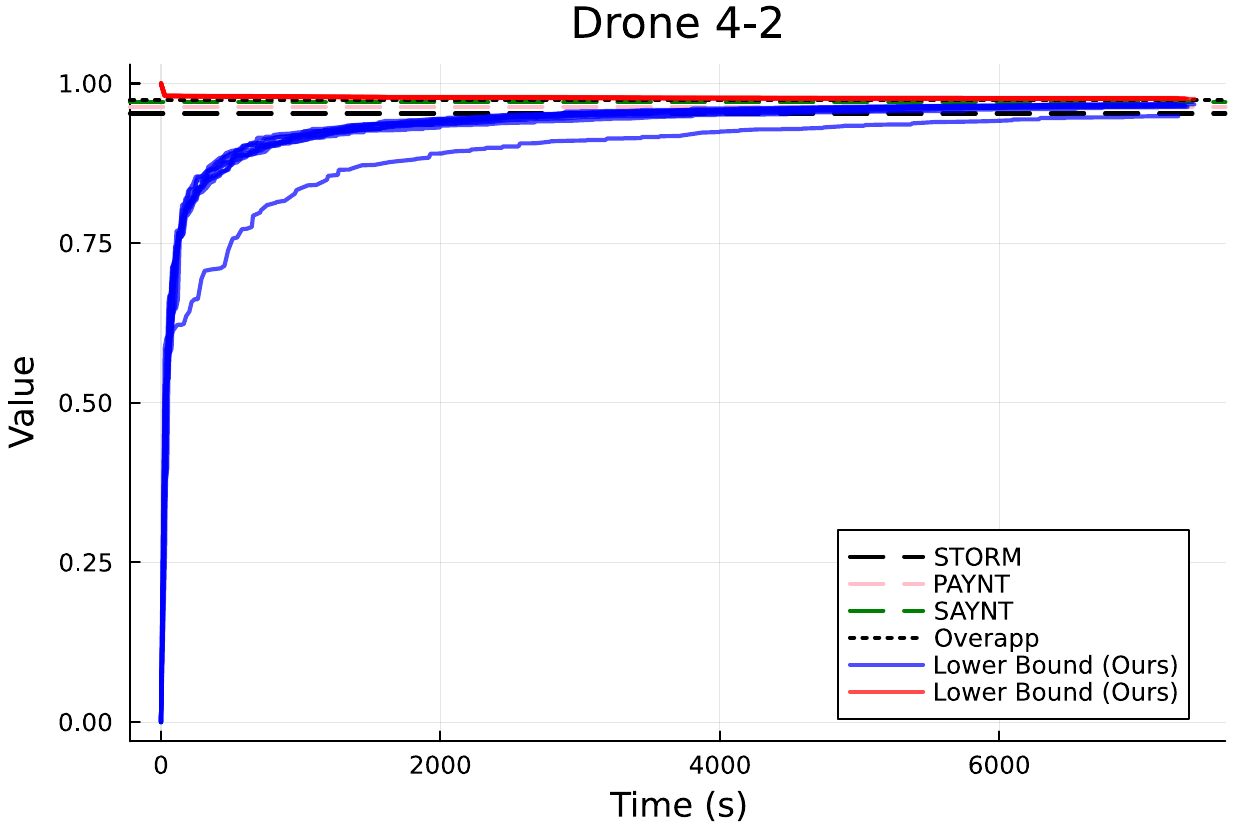}
        \label{fig:sub5}
    \end{subfigure}
    \begin{subfigure}{0.44\textwidth}
        \centering
        \includegraphics[width=\textwidth]{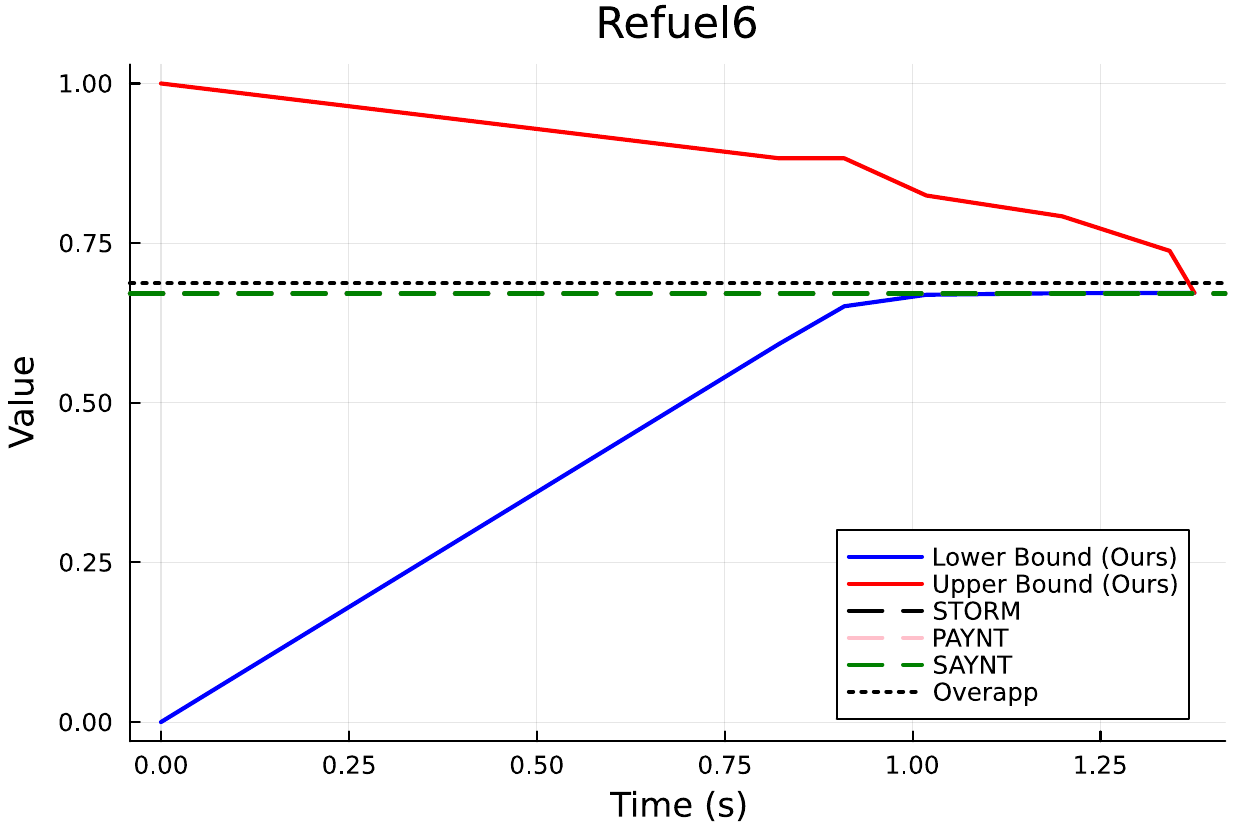}
        \label{fig:sub6}
    \end{subfigure}

    \begin{subfigure}{0.44\textwidth}
        \centering
        \includegraphics[width=\textwidth]{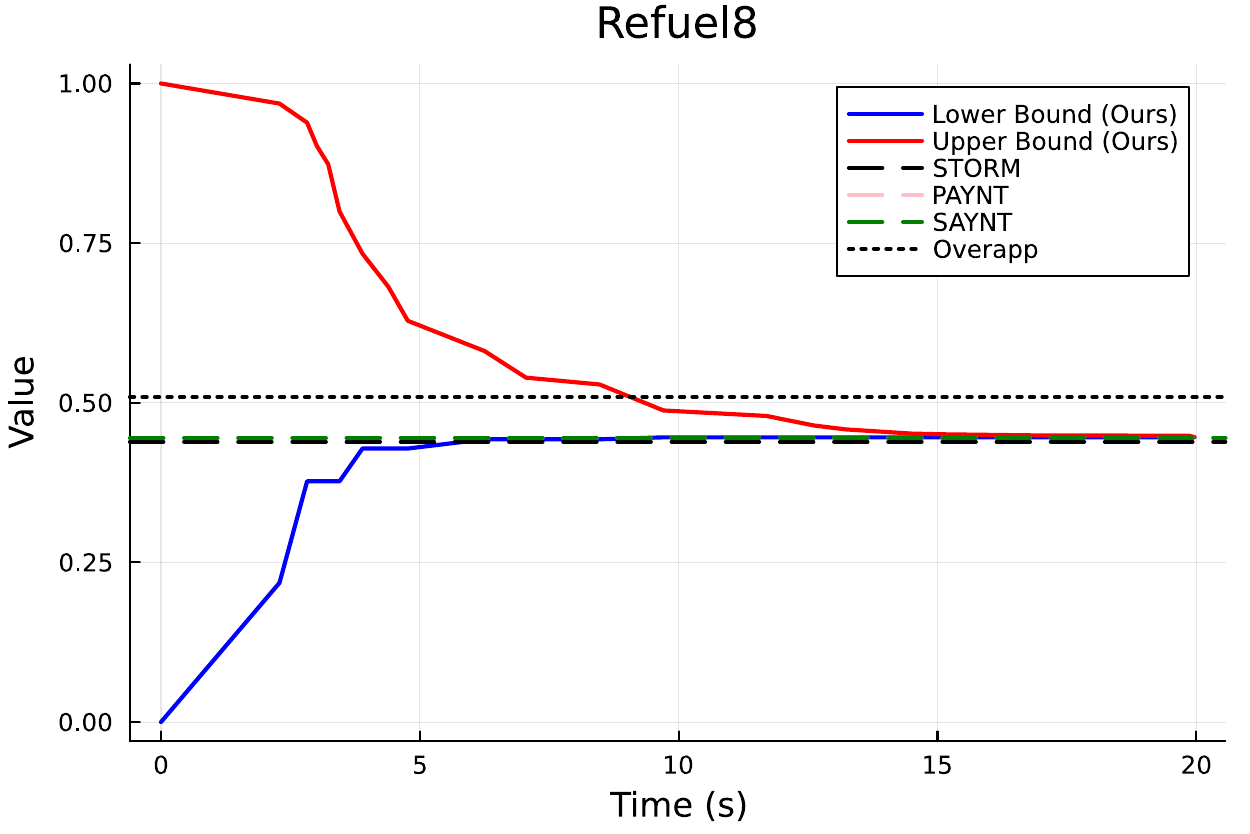}
        \label{fig:sub7}
    \end{subfigure}
    \begin{subfigure}{0.44\textwidth}
        \centering
        \includegraphics[width=\textwidth]{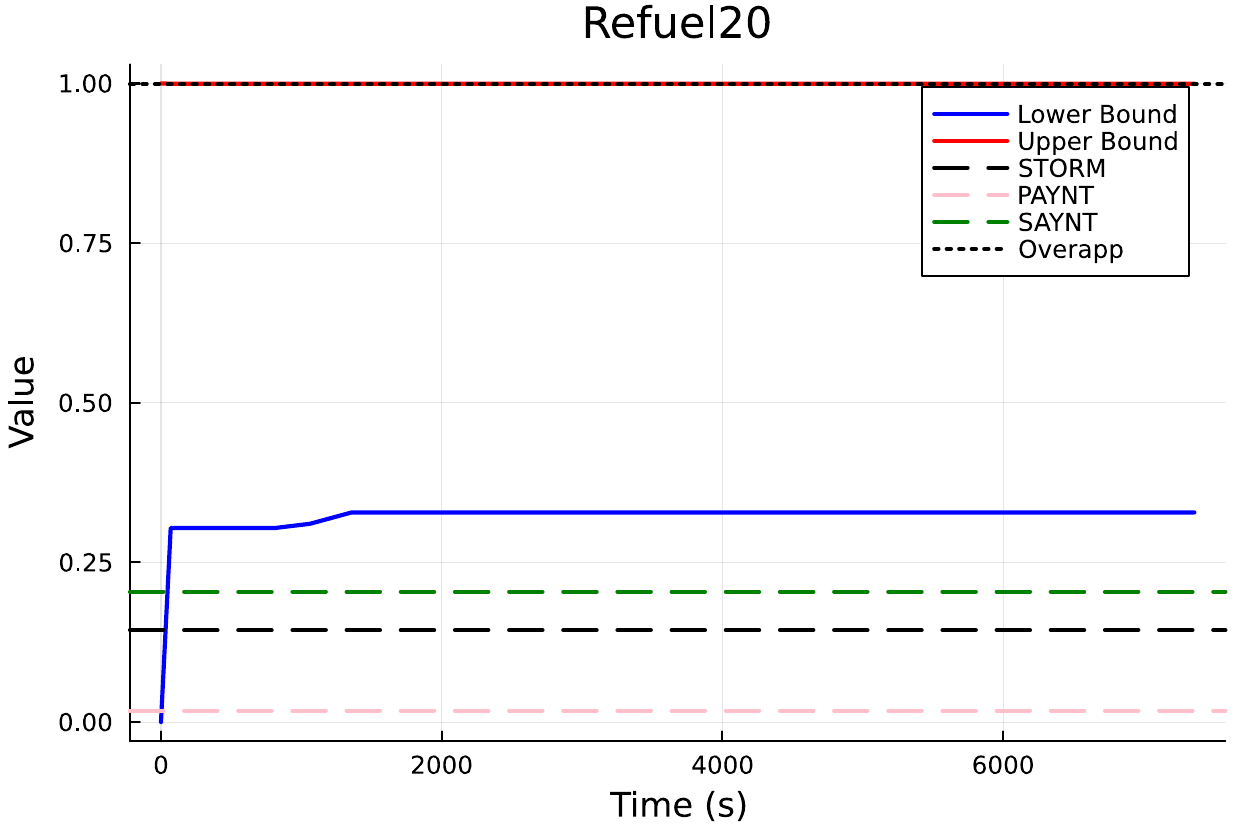}
        \label{fig:sub8}
    \end{subfigure}
    \caption{Evolution of lower and upper bound values over time. Overapp computes upper bounds, while STORM, PAYNT, and SAYNT compute lower bounds.}
    \label{fig: evolution}
\end{figure*}

\end{document}